\documentclass[lettersize,journal]{IEEEtran}
\usepackage{amsmath,amsfonts}
\usepackage{algorithmic}
\usepackage{algorithm}
\usepackage{array}
\usepackage[caption=false,font=normalsize,labelfont=sf,textfont=sf]{subfig}
\usepackage{textcomp}
\usepackage{stfloats}
\usepackage{url}
\usepackage{verbatim}
\usepackage{graphicx}
\usepackage{cite}

\usepackage{color}
\usepackage{amsmath,lipsum}
\usepackage{tabularx}
\usepackage{amsmath,amsfonts}
\usepackage{multirow}
\usepackage{widetext}
\usepackage{makecell}

\usepackage{amsmath}
\usepackage{amssymb}
\usepackage{amsthm}

\newtheorem{theorem}{Theorem} 
\newtheorem{lemma}{Lemma}
\newtheorem{corollary}{Corollary}
\newtheorem{definition}{Definition}

\begin{document}

\title{Dualformer: Efficient Feature Extractor for Complex-valued Blind Communication Signal Analysis}

\author{Yurui Zhao, Xiang Wang, Jingreng Lei, Wanlong Zhang, Yik-Chung Wu,~\IEEEmembership{Senior Member, IEEE}, Zhitao Huang
\thanks{Yurui Zhao, Xiang Wang, Wanlong Zhang, and Zhitao Huang are with College of Electronic Science and Technology, National University of Defense Technology, Changsha 410073, China.}
\thanks{Jingreng Lei and Yik-Chung Wu are with the Department of Electrical and Computer Engineering, The University of Hong Kong (E-mail: leijr@eee.hku.hk;ycwu@eee.hku.hk).}
\thanks{Corresponding author: Xiang Wang (E-mail: xwang@nudt.edu.cn)}
\thanks{Research is supported by the National Natural Science Foundation of China, Grant No.62271494.}
\thanks{Manuscript received December 31, 2023.}}

\markboth{Journal of \LaTeX\ Class Files,~Vol.~14, No.~8, August~2021}%
{Shell \MakeLowercase{\textit{et al.}}: A Sample Article Using IEEEtran.cls for IEEE Journals}

\maketitle

\begin{abstract}
Designing effective feature extractors is critical for blind signal analysis tasks such as automatic modulation recognition (AMR), signal scheme recognition (SSR), and \color{black} signal structure parsing (SSP). 
\color{black}
In this work, we propose dual-channel neural network (DualNN) that efficiently exploits complex-valued signals through parameter sharing across IQ channels. Unlike traditional real-valued or complex-valued models, DualNN is a groundbreaking framework which shares the network parameters for processing the real and imaginary parts of the complex-valued signals, and is theoretically shown to reduce generalization error while preserving expressive capacity. Specifically, we propose a novel Transformer-based architecture to implement DualNN, called Dualformer. 
The Dualformer segments input signals into patch-level tokens and captures multi-granularity features, enabling robust performance across diverse signal analysis tasks.
Furthermore, we conduct extensive experiments comparing Dualformer with three Transformer-based baselines and four conventional DL-based approaches. 
Results demonstrate consistent performance improvements on AMR, SSR, and SSP tasks. 
Besides, the modular design of DualNN allows it to generalize well to blind signal processing tasks such as blind source separation and low-SNR spectrum sensing. 
This work paves the way for a broader application of DualNN architectures in unsupervised and weakly supervised complex-valued signal analysis scenarios.
\end{abstract}

\begin{IEEEkeywords}
Automatic modulation recognition, signal scheme recognition, 
\color{black}
signal structure parsing,
\color{black}
complex-valued signal, transformer, generalization error.
\end{IEEEkeywords}

\section{Introduction}
\IEEEPARstart{B}{lind} analysis of communication signals has occupied an increasingly important position with the rapid advancement of integrated communication and sensing technologies \cite{liu2022integrated}, spectrum monitoring \cite{de2008blind}, cognitive radio \cite{awin2018blind}, and the upcoming 6G communication systems \cite{chowdhury20206g, 11431942}.
Blind signal analysis is defined as the process of performing signal modulation parameter estimation and related analyses using only the received communication signal data, without detailed knowledge of the signal characteristics. It enables critical post-processing functionalities such as information recovery, target recognition, and jamming guidance.

\begin{figure*}[!t]
\centering
\includegraphics[width=0.9\linewidth]{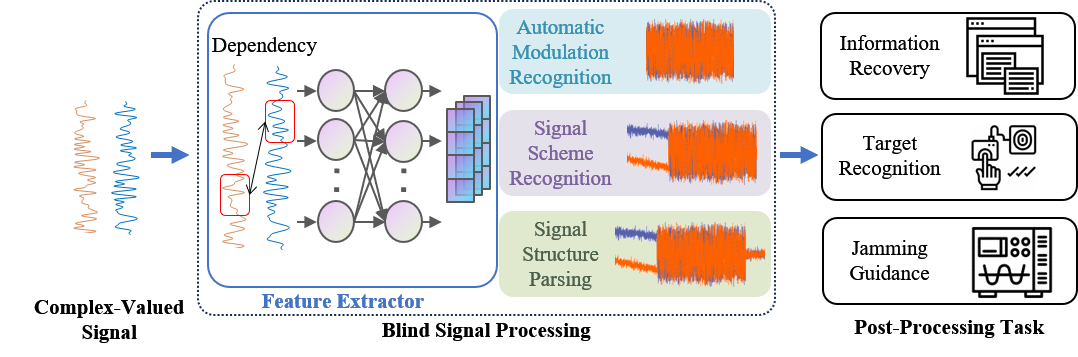}
\caption{Schematic plot of the feature extractor in blind communication signal analysis. }
\label{fig_task}
\end{figure*}

Currently, blind analysis of communication signals primarily focuses on three different tasks: automatic modulation recognition (AMR), signal scheme recognition (SSR), and signal structure parsing (SSP), as illustrated in Fig.~\ref{fig_task}.
AMR and SSR identify the modulation type and signal scheme (e.g., Wi-Fi, Bluetooth), respectively, both based on the received signal data. SSP can segment the received signals into distinct constituent structure components while identifying the modulation types corresponding to each segment simultaneously.
These three tasks depend on the extraction of multi-scale hierarchical features from in-phase and quadrature (IQ) sequences, spanning low-level signal characteristics to high-level semantic representations, to accomplish their respective identification objectives.
Specifically, the differences in feature extraction among the three tasks are as follows:
(1) AMR primarily performs coarse-grained analysis, classifying entire signal segments into specific modulation types.
(2) SSR similarly operates at a coarse-grained level but emphasizes long-term temporal features, as different signal schemes exhibit unique physical-layer signal structures.
(3) SSP requires fine-grained feature extraction, analyzing individual signal samples to jointly classify structural segments and modulation types. This approach not only facilitates signal identification but also enables the analysis of unknown signal structures.
Feature extraction from raw IQ sequences is critical for accurate blind signal analysis, directly impacting recognition and classification performance.

Recently, propelled by its monumental success across various fields, deep learning (DL) has found widespread and highly successful applications in the blind analysis of communication signals. A variety of prominent architectures, including convolutional neural networks (CNNs), recurrent neural networks (RNNs), long short-term memory (LSTM) networks, and Transformers, have been established as powerful feature extraction methods in this domain. In particular, Transformer-based methods have emerged as a dominant force. Building on their remarkable success in natural language processing and diverse scientific domains, such as solving ordinary differential equations \cite{becker2023predicting}, modeling physical dynamical systems \cite{Yang2023FLUIDGPTL}, and time-series forecasting \cite{nie2023patchtst}, \cite{liu2023itransformer}, Transformers have demonstrated a profound capability to capture complex, long-range temporal dependencies inherent in sequential communication data.
Despite these significant architectural advances, current research on DL-based blind signal analysis focuses mainly on developing novel network architectures, advanced loss functions, or optimization algorithms. Limited attention has been paid to the fundamental aspect of how to effectively exploit the inherent complex-valued structure of the raw IQ input data. 

\color{black}
Current research on the utilization of the complex-valued communication signal using DL can be classified into two main paradigms (as illustrated in Fig.~\ref{fig:paradigm}).
\begin{figure}[!t]
\centering
\subfloat[]{\includegraphics[width=0.26\linewidth]{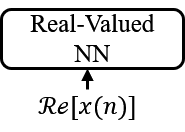}}
\hspace{1cm}
\subfloat[]{\includegraphics[width=0.36\linewidth]{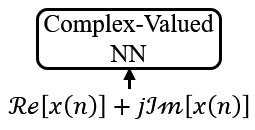}}
\caption{Comparison of real-valued and complex-valued paradigms.
(a) RVNN. 
(b) CVNN.
 }
\label{fig:paradigm}
\end{figure}
\textbf{(1) Real-Valued:} 
The real-valued approach can be implemented in two ways: using only the I component as  the input of the model or concatenating both the I and Q components to form the input signal, thus facilitating the application of conventional real-valued neural network (RVNN) architectures. 
In this class of methods, early approaches utilized CNNs for modulation classification \cite{o2016convolutional,zhang2024class,lin2020improved}. 
To capture temporal dependencies, RNNs, especially LSTM variants, were later adopted \cite{rajendran2018deep, ke2021real}. 
Subsequent hybrid CNN-RNN architectures further improved performance\cite{zhang2020automatic ,wang2021multidimensional}. 
Recently, transformer-based models have surpassed CNNs and RNNs in AMR by leveraging self-attention mechanisms. \cite{li2022transformer,zheng2022tmrn,chen2022abandon}.

\textbf{(2) Complex-Valued:} 
Since complex-valued signals inherently encapsulate amplitude and phase information, a recent trend is to treat the IQ signal as an inseparable entry and directly feed into the neural network. Complex-valued neural networks (CVNNs) enable direct feature extraction by processing complex numbers natively, employing complex weights, activation functions, and backpropagation algorithms.
%
Tu et al. \cite{tu2020complex} pioneered CVNNs for AMC, introducing core components such as complex convolution, batch normalization, and weight initialization. Subsequent work optimized CVNN architectures for reduced complexity \cite{xiao2023complex, luo2023complex}. Ren et al. \cite{ren2022complex} extended CVNNs to recurrent structures, proposing a parallel CNN-RNN model, while complex-valued LSTM and GRU variants were developed in \cite{yang2022automatic,xu2023novel}. Recently, Lei demonstrated that complex-valued Transformers outperform their real-valued counterparts by jointly modeling amplitude and phase through self-attention \cite{lei2024understanding, leng2025unveiling,LiDWYH24}.

Previous studies have demonstrated that employing IQ signals can effectively achieve a better theoretical performance limit; however, this approach also leads to increased computational complexity and greater challenges in model optimization.
Theoretical analysis and experimental results demonstrate that the generalization error, consisting of approximation error and estimation error, exhibits fundamentally different characteristics between CVNNs and RVNNs. Two key findings are observed. First, CVNNs achieve enhanced representational capacity through their higher-dimensional parameter space, leading to theoretically guaranteed smaller approximation error. Second, the resulting complex error surface morphology causes CVNNs to converge more frequently to spurious local minima, thereby producing greater estimation error compared to RVNNs.

In summary, a critical challenge in DL-based blind signal analysis research lies in two aspects:
\begin{enumerate}
    \item Regarding complex-valued data, numerous practices exist for adapting it to RVNNs. However, these methods often lack a solid theoretical foundation to rigorously justify their effectiveness. Moreover, as CVNNs are high-capacity models that typically require large amounts of training data to find good solutions, these adaptation approaches fail to address the fundamental challenge of data scarcity in non-cooperative scenarios.
    \item A key challenge lies in designing a model that effectively balances the trade-off between approximation error and estimation error. The central technical problem involves effectively leveraging I and Q components to maintain the representational benefits of complex-valued architectures while addressing their inherent optimization instability.
\end{enumerate}

In this paper, we explore a new alternative for processing complex-valued communication signals: Dual-channel Neural Networks (DualNN), where the network parameters for processing the real part and the imaginary part signals are shared. The general framework is accompanied by mathematical proofs on its properties. To demonstrate the benefits of DualNN, we further propose a dual-channel Transformer architecture (DualFormer) for specific tasks, incorporating two tailored projection headers designed to meet different granularity requirements: Header V1 for coarse-grained tasks (AMR and SSR) and Header V2 for fine-grained tasks (SSP).

Contributions of this paper are summarized as follows.
\begin{enumerate}
\item Propose DualNN, a groundbreaking framework for processing complex-valued communication signals using RVNN. Our innovative symmetric architecture with parameter sharing not only stabilizes training but also dramatically reduces model complexity.
\item Provide mathematical proof, rooted in generalization theory, that the proposed DualNN achieves a tighter generalization error bound than RVNN universally and outperforms CVNN under limited training data. Unlike RVNN, which fails to capture complex-valued structures, DualNN leverages shared transformation to extract and combine information from the I and Q components of complex-valued signals. While CVNN requires abundant data to learn these relationships, the parameter-sharing architecture of the proposed DualNN enables robust performance in data-scarce scenarios. 
\item Propose Dualformer, a novel dual-channel Transformer module that processes signal patches as tokens while effectively exploiting I and Q component relationships. Specifically, we design two specialized projection headers to achieve multi-scale feature extraction and ensure high-precision performance across different tasks including AMR, SSR, and SSP. Simulation results demonstrate the proposed Dualformer outperforms the state-of-the-art methods in these three tasks.

\end{enumerate}

The rest of the article consists of five parts.
Section II illustrates the task and model of blind communication signal analysis.
Section III introduces the dual-channel neural network architecture and gives a theoretical analysis of the generalization error to show the advantages of our proposed architecture.
Section IV presents Dualformer, a specific instantiation of DualNN that adopts the Transformer architecture as its core model.
Experiments on AMR, SSR, and SSP are conducted in Section V.
Section VI concludes the paper.

\section{Problem Statement}
\subsection{Model of Communication Signals}
A typical wireless communication system includes a transmitter, wireless channel, and receiver.  
The transmitted information goes through a source encoder, channel encoder, modulator, digital-to-analog (D/A) converter, and shaping filter to obtain the baseband signal ${x_b}\left( n \right)$, expressed as 
\begin{equation}
\begin{array}{l}
{x_b}\left( n \right) = \underbrace {\sum\limits_{m = 0}^{{M_1} - 1} {b_m^{{k_1}}} g\left( {n - m{T_{{b_1}}}} \right)}_{a_1} +  \cdots \\
\underbrace {\sum\limits_{m = 0}^{{M_S} - 1} {b_m^{{k_S}}} g\left( {n - m{T_{{b_S}}} - \sum\limits_{s = 1}^{S - 1} {{M_s}{T_{{b_s}}}} } \right)}_{a_S}
\end{array},
\end{equation}
where the subscript $s$ ($s \in \{{1,...,S}\}$) represents the section index and $S$ is the total number. 
The $s$-th section $a_s$ contains $M_s$ symbols and utilizes the $k_s$-th ($k_s \in \{{1,...,K}\}$) modulation type with a symbol period at $T_{b_s}$. $K$ represents the total number of candidate modulation types.
$b_m^{k_s}$ denotes the amplitude, frequency, and phase for the $m$-th symbol in the $s$-th section, while $g\left(n\right)$ represents the shaping filter.

The baseband signal $x_b(n)$ is then shifted by the carrier frequency to obtain the bandpass signal $x_c\left(t\right)$, which can be simplified as
\begin{equation}
\begin{gathered}
  x_c\left( n \right) = \mathcal{R} \left\{ {{x_b}\left( n \right)} \right\}\cos \left( {2\pi {f_c}n} \right) - j\mathcal{I}\left\{ {{x_b}\left( n \right)} \right\}\sin \left( {2\pi {f_c}n} \right) \hfill \\
  \quad \;\;\; = {x_{real}}\left( n \right)\cos \left( {2\pi {f_c}n} \right) - j{x_{imag}}\left( n \right)\sin \left( {2\pi {f_c}n} \right) \hfill \\ 
\end{gathered} ,
\end{equation}
where ${x_{real}}\left(n\right) = \mathcal{R}\{{{x_b}\left(n\right)}\}$ and ${x_{imag}}\left(n\right) = \mathcal{I}\{{{x_b}\left(n\right)}\}$ represent the in-phase and quadrature components, respectively.
In practical wireless communication systems, the transmitted signal is affected by various imperfections in channels and receivers. These include phase noise, frequency shift, and additive white Gaussian noise. 
Furthermore, inaccurate detection introduces residual noise at the beginning and end of the received signal. Consequently, the received signal can be written as
\begin{equation}
x\left( n \right) = R\left[ {H\left[ {x_c\left( n \right)} \right]} \right],
\label{eq_sig}
\end{equation}
where $H\left[\cdot\right]$ represents the channel effects and $R\left[\cdot\right]$ denotes the receiving process.

\subsection{Tasks of Blind Communication Signal Analysis}
Blind communication signal analysis tasks can be divided into coarse-level classification of entire signals (e.g., AMR and SSR) and fine-grained labeling of individual sampling points (e.g., SSP).

\subsubsection{AMR}
In AMR, signals are typically modeled with a single modulation type, and the task aims to identify this modulation from intercepted signals \cite{wang2025survey}. 
Formally, the task can be expressed as
\begin{equation}
    {\cal F} :  x(n)  \to  k,
    \label{eq_amr}
\end{equation}
where $k$ refers to the $k$-th modulation type.

\subsubsection{SSR}
Similar to AMR, SSR aims to identify the signal scheme from intercepted signals, which can be formally expressed as 
\begin{equation}
    {\cal F} :  x(n)  \to  {\cal X}_,
    \label{eq_amr}
\end{equation}
where ${\cal X}$ denotes the  signal scheme, which corresponds to a uniquely structured signal comprising a preamble and data section with fixed lengths and distinct modulation characteristics.

\subsubsection{SSP}
The signal structure parsing task aims to decompose the received signal by identifying its constituent sections and their respective modulation types. 
Let $f_s$ denote the sampling rate and $N_L$ represent the total number of sampling points in the signal, given by
\begin{equation}
    N_L=\sum_{s=1}^{S}{N_s},
\end{equation}
where $N_s = f_s \times T_{b_s} \times M_s$ denotes the number of sampling points in $s$-th section.
We can model the signal structure parsing task as a supervised time series classification problem with the composite labels $(a(n), k(n))$, expressed as
\begin{equation}
{\cal F} :  x(n)  \to (a(n), k(n)) , n=1,2,\cdots,N_L,
\end{equation}
where $a(n)$ and $k(n)$ represent the section and modulation type label for the $n$-th sampling point, respectively.

\section{Dual-Channel Neural Network and Generalization Error Analysis}
\subsection{Dual-Channel Neural Network}
We propose a dual-channel neural network framework, hereafter referred to as DualNN, specifically designed for blind analysis of complex-valued communication signals. 
The overall workflow and architectural details of DualNN are visually represented in Fig.~\ref{fig:dualnn}.
\begin{figure}[!t]
    \centering
    \includegraphics[width=0.85\linewidth]{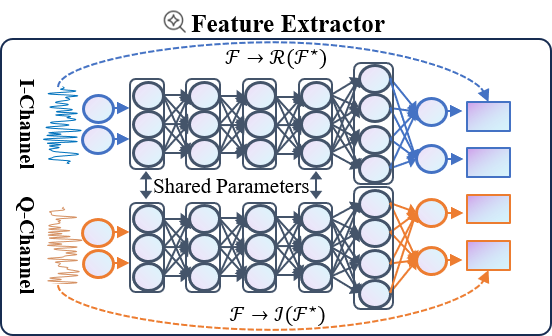}
    \caption{Architecture of DualNN for communication signal analysis.}
    \label{fig:dualnn}
\end{figure}

The core design of our proposed framework lies in its innovative approach to handling I and Q data. Recognizing that complex-valued communication signals are inherently represented by I and Q components, our framework is uniquely structured to process both I and Q inputs using shared parameters. 
More specifically, the feature extraction process of DualNN can be written as follows:
\begin{equation}
    \mathbf{z}^{(I)} = \mathcal{F}_{dual}(\mathbf{x}_{real}),\ 
    \mathbf{z}^{(Q)} = \mathcal{F}_{dual}(\mathbf{x}_{imag}),
\end{equation}
where $\mathcal{F}_{dual}$ is a RVNN and $\mathbf{x}_{real}$ and $\mathbf{x}_{imag}$ are the real and imaginary parts of the input signal, respectively. To see the difference from CVNN, its corresponding feature extraction process is written as
\begin{equation}
    \mathbf{z} = \mathcal{F}_{comp}(\mathbf{x}_{real}+j\mathbf{x}_{imag}),
\end{equation}
where $\mathcal{F}_{comp} = {\mathcal{R}}({\cal F}_{comp}) + j{\mathcal{I}}({\mathcal{F}}_{comp})$.
Compared to the CVNN, the proposed DualNN shows two advantages. \color{black} Firstly, it introduces a physically justified inductive bias that explicitly respects the intrinsic statistical properties of the I and Q channels. Many standard complex baseband communication signals (e.g., M-ary QAM and PSK) possess circular symmetry, meaning that for any phase rotation $\phi$, $x(n)$ and $e^{j\phi}x(n)$ follow the same distribution. From a second-order perspective, this circularity implies a vanishing pseudo-covariance $\mathbb{E}[x(n)^2]=0$, which for zero-mean signals expands to $\mathbb{E}[x_{real}^2(n)]=\mathbb{E}[x^2_{imag}(n)]$ and $\mathbb{E}[x_{real}(n)x_{imag}(n)]=0$. Hence, I and Q share identical second-order statistics. Moreover, under stationary or approximately locally stationary conditions, the temporal statistics remain stable over time, allowing a consistent feature extraction rule to be applied across signal segments. By sharing parameters between the two branches rather than processing them through independent or unconstrained networks, DualNN explicitly leverages this underlying physical symmetry and stationarity, \color{black} thereby leading to a robust representation of the complex-valued signal.
Secondly, with identical network depth and parameters for processing the real and imaginary parts of the signal, the proposed DualNN framework reduces computational complexity by half compared to the CVNNs. This efficiency gain is achieved through parameter sharing, which additionally enhances training convergence and helps avoid poor local optima.

In general, the proposed DualNN framework is compatible to any neural network modules depending on different task requirements. In Section IV, we will demonstrate how the DualNN is adopted to the three blind signal analysis tasks described in Section II.

\subsection{Generalization Error}
Blind signal analysis is to map complex-valued signals to real-valued labels $y$ via the function ${\cal F}$, expressed as
\begin{equation}
    {\cal F}(\mathbf{x}) = \sigma_L(\mathbf{A}_L \sigma_{L-1}(\mathbf{A}_{L-1} \cdots \sigma_1(\mathbf{A}_1 \mathbf{x}))),
    \label{eq_fa}
\end{equation}
where $\mathbf{A}_l$ and $\sigma_l$ are the weight matrix and activation function for the $l$-th layer. 
The network output ${\cal F}(\mathbf{x}) \in \mathbb{R}^{d_{L}}$ (with $d_{0} = N_L$ and $d_{L} = K$) is converted to a class label in $\{1, \ldots, K\}$ by taking the $\arg\max$ over components.

Suppose there are a total of $N$ instances, the empirical risk minimization can be written as 
\begin{equation}
    {\cal E}({\cal F}) = \frac{1}{N} \sum_{n=1}^{N} {\cal L}({\cal F}(\mathbf{x}_n),y_n),
\end{equation}
where ${\cal L}({\cal F}(\mathbf{x}_n),y_n)$ is the loss function measuring the error between the predicted result ${\cal F}(\mathbf{x}_n)$ and the ground truth $y_n$ for the $n$-th instance. 

\begin{definition}[Generalization Error]
The performance of a neural network architecture comprises the approximation error and the estimation error \cite{jakubovitz2019generalization}, written as:
\begin{equation}
\underbrace {|{\cal E}({{\cal F}^ \star }) - {\cal E}(\widehat {\cal F})|}_{{\varepsilon^{general}}} = \underbrace {|{\cal E}({{\cal F}^ \star }) - {\cal E}(\widehat {{{\cal F}^ \star }})|}_{{\varepsilon^{approx}}} + \underbrace {|{\cal E}(\widehat {{F^ \star }}) - {\cal E}(\widehat {\cal F})|}_{{\varepsilon^{est}}},
\end{equation}
where  
${\cal F}^{\star}$ represents the optimal function, $\hat{\cal F}^{\star}$ is the best classifier that a neural network architecture can learn within its hypothesis space $\cal H$, and $\hat{\cal F}$ is the final classifier that is learnt based on a given training dataset ${\cal D}=\{x_n,y_n|n=1,2,\cdots, N\}$.
$|{\cal E}({\cal F^{\star}})-{\cal E}(\hat{{\cal F}^{\star}})|$ is the approximation error, which refers to the distance between the target function and the closest neural network function of a given architecture.
$|{\cal E}(\hat{{\cal F^{\star}}})-{\cal E}(\hat{{\cal F}})|$ is the estimation error, which refers to the distance between this ideal network function and an estimated network function.
\end{definition}

Next, we are going to compare the generalization of RVNN, CVNN, and DualNN in the process of feature extraction of complex-valued signals.
For RVNNs, we assume that ${\cal F}_{real}$ uses $L$ weight matrices $\mathbf{A}_l \in \mathbb{R}^{d_{l-1}\times d_{l}}$ and the fixed nonlinearities $\sigma_l \in \mathbb{R}^{d_{l}}$ is $\rho_l$-Lipschitz.
Analogously, weight matrices ${\mathbf{A}}_l \in \mathbb{C}^{d_{l-1}\times d_{l}}$ and nonlinearities $\sigma_l \in \mathbb{C}^{d_{l}}$ are preset for CVNNs ${\cal F}_{comp}$.

We first present an intuitive comparative analysis of the generalization errors among the three methods, as visually demonstrated in Fig.~\ref{fig:hypospace}. 
The hypothesis space $\mathcal{H}$, which represents the set of candidate functions used to approximate target mappings, serves as a critical framework for this comparison. To ensure fair comparison, we impose the constraint that all three methods maintain an identical number of neural nodes. Under this configuration, if the hypothesis space capacity for both RVNN and DualNN is quantified as $k$, CVNN exhibits a doubled capacity of $2k$. 
The nested family of hypothesis spaces ${\cal H}^{(k)}$ can be written as
\begin{equation}
    {\cal H}^{(0)} \subset {\cal H}^{(1)} \subset \cdots \subset {{\cal H}^{(k)}} \subset \cdots
    \subset {{\cal H}^{(2k)}} \subset \cdots,
\end{equation}
where ${\cal H}^{(k)}$ represents a sequence of hypothesis spaces and the hypothesis space is represented by ${\cal H}=\cup_k{\cal H}^{(k)}$.

\begin{figure}[!t]
    \centering
    \includegraphics[width=0.85\linewidth]{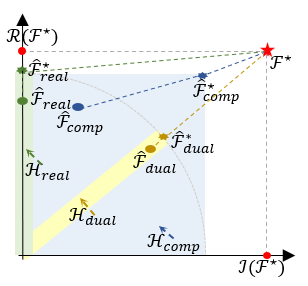}
    \caption{Visualization of generalization error for real-valued, complex-valued, and dual-channel neural networks.}
    \label{fig:hypospace}
\end{figure}

Fig. \ref{fig:hypospace} illustrates a critical trade-off: although CVNNs' expanded hypothesis space reduces approximation error through enhanced representational power, it concurrently amplifies estimation error due to increased model complexity under finite training instances. 
This phenomenon aligns with the bias-variance dilemma in statistical learning. DualNNs, however, strategically balance these competing factors. By constraining hypothesis space dimensionality while preserving cross-component interactions, they mitigate overfitting risks without sacrificing critical representational flexibility. Consequently, DualNNs achieve a comprehensive performance advantage in generalization error compared to RVNNs and CVNNs.

It is also noteworthy that our analysis assumes the fully connected neural network (FNN), which, by virtue of the universal approximation theorem \cite{stinchcomb1989multilayered}, dominates the entire hypothesis space. This theoretical framework extends to other prominent architectures, including CNNs \cite{Jiang2021}, RNNs \cite{Li2021,Li2022}, and Transformer models \cite{jiang2024approximation}.

\subsection{Approximation Error}
For analytical convenience, we assume the cost function $\mathcal{L}$ in (11) takes a quadratic form, specifically the mean squared error (MSE), so that the empirical risk minimization (11) becomes \cite{girosi1995approximation}: 
\begin{equation}
    {\cal E}({\cal F}) = ||{\cal F}^{\star}-{\cal F}|| + {\cal E}({\cal F}^{\star}),
\end{equation}
and therefore the approximation error can be characterized as Definition 2 below.
The conclusions of subsequent proofs in this paper can be extended to other cost functions.
\begin{definition}[Approximation Error]
For a hypothesis space $\mathcal{H}$, the approximation error of $\mathcal{H}$ with respect to the target function $\mathcal{F}^{\star}$ is defined as
\begin{equation}
    \epsilon^{\text{approx}}(\mathcal{H}) = \inf_{f \in \mathcal{H}} \bigl\| \mathcal{F}^{\star} - f \bigr\|.
\end{equation}
In our specific setting, we measure the accuracy of an approximation $\mathcal{F}_k(\mathbf{x})$ within the hypothesis space $\mathcal{H}^{(k)}$ to the target function $\mathcal{F}^{\star}(\mathbf{x})$ in terms of the $l_2(\mu,B)$ norm,
\begin{equation}
    \bigl\| \mathcal{F}^{\star} - \hat{\mathcal{F}}_k^{\star} \bigr\| = \int_B \bigl| \mathcal{F}^{\star}(\mathbf{x}) - \hat{\mathcal{F}}_k^{\star}(\mathbf{x}) \bigr| \, \mu(d\mathbf{x}),
\end{equation}
for an arbitrary probability measure $\mu$ with domain $B$. The integral value quantifies the discrepancy between the optimal function $\mathcal{F}^{\star}$ and the best approximated function $\hat{\mathcal{F}}_k^{\star}$ within $\mathcal{H}^{(k)}$.
\end{definition}

Definition 2 quantifies the approximation error, and Lemma 1 provides its upper bound.

\begin{lemma}[Jackson-type Approximation Rate]
The Jackson-type Approximation Rate \cite{barron1994approximation} quantifies how well a target function ${\cal F}^{\star}(\mathbf{x})$ can be approximated by functions within a specific hypothesis space, particularly as the complexity or budget of these approximation spaces $k$ increases

\begin{equation}
\mathop {\inf }\limits_{{\hat {\cal F}^{\star}}} \left\| {{\cal F}^{\star} - \hat {\cal F}^{\star}} \right\| \le E\left( {{C^{(\alpha )}}({\cal F^{\star}}),k} \right).
\end{equation}
It states that for any function belonging to a designated target space ${C}^{(\alpha)}$ (characterized by a finite complexity measure $C^{(\alpha)}({\cal F}^{\star})$), the minimum approximation error, $||{\cal F}^{\star}-\hat{\cal F}^{\star}||$, is bounded by an error function $E(C^{(\alpha)}({\cal F}^{\star}), k)$. Futhermore, this error bound $E(\cdot,k)$ decreases to zero as $k$ approaches infinity, with the rate of this decay being termed the approximation rate. Essentially, it provides a theoretical guarantee on the achievable accuracy, linking the inherent complexity of the target function to the efficiency of its approximation as the model's capacity grows.
\end{lemma}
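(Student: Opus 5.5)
The plan is to instantiate Lemma 1 with Barron's theorem. The target space $C^{(\alpha)}$ is taken to be the Barron class with $\alpha=1$: functions $\mathcal{F}^{\star}$ on a bounded domain $B\subset\mathbb{R}^{d_0}$ whose Fourier transform $\tilde{\mathcal{F}}^{\star}$ satisfies
\begin{equation*}
C^{(1)}(\mathcal{F}^{\star})=\int \|\omega\|_B\,|\tilde{\mathcal{F}}^{\star}(\omega)|\,d\omega<\infty,
\end{equation*}
where $\|\omega\|_B=\sup_{\mathbf{x}\in B}|\omega^{\top}\mathbf{x}|$. The budget $k$ is the number of hidden units, so $\mathcal{H}^{(k)}$ is the set of one-hidden-layer sigmoidal networks with $k$ nodes. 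This gives the nested family described earlier. The claim then becomes
\begin{equation*}
E\bigl(C^{(1)}(\mathcal{F}^{\star}),k\bigr)=\frac{2\,C^{(1)}(\mathcal{F}^{\star})}{\sqrt{k}},
\end{equation*}
which decreases to zero as $k\to\infty$.

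The proof proceeds in three steps.

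\textbf{Step 1: integral representation.} Using Fourier inversion and subtracting $\mathcal{F}^{\star}(0)$, write $\mathcal{F}^{\star}(\mathbf{x})-\mathcal{F}^{\star}(0)$ as an integral of $\cos(\omega^{\top}\mathbf{x}+\theta(\omega))-\cos\theta(\omega)$ against the complex measure $\tilde{\mathcal{F}}^{\star}(\omega)\,d\omega$. Normalizing by $\|\omega\|_B$ turns this into an expectation, under a probability measure proportional to $\|\omega\|_B|\tilde{\mathcal{F}}^{\star}(\omega)|$, of functions bounded by $C^{(1)}(\mathcal{F}^{\star})$. Hence $\mathcal{F}^{\star}$ lies in the closed convex hull of a dictionary $G$ of ridge functions with uniform bound $C^{(1)}$.

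\textbf{Step 2: reduction to sigmoids.} Show that each such one-dimensional ridge function, which has bounded variation on the bounded interval $\{\omega^{\top}\mathbf{x}:\mathbf{x}\in B\}$, is a limit in $l_2(\mu,B)$ of convex combinations of scaled step functions, and hence of steep sigmoids. This places $\mathcal{F}^{\star}-\mathcal{F}^{\star}(0)$ in the closed convex hull of sigmoidal units with bound $2C^{(1)}$.

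\textbf{Step 3: Maurey--Jones--Barron sampling.} If $f\in\overline{\mathrm{co}}(G)$ in a Hilbert space and $\|g\|\le b$ for every $g\in G$, draw $g_1,\dots,g_k$ i.i.d.\ from the representing measure and set $f_k=\frac1k\sum g_i$. A variance computation gives
\begin{equation*}
\mathbb{E}\|f-f_k\|^2\le\frac{b^2-\|f\|^2}{k},
\end{equation*}
so some realization achieves error at most $b/\sqrt{k}$. Each $f_k$ is a $k$-node network, so it lies in $\mathcal{H}^{(k)}$. Taking the infimum over $\mathcal{H}^{(k)}$ yields the bound.

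Note that Definition 2 uses an $L_1(\mu)$-type norm. Since $\mu$ is a probability measure, Cauchy--Schwarz shows the $L_1(\mu)$ error is bounded by the $L_2(\mu)$ error, so the Hilbert-space bound transfers directly.

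\textbf{Main obstacle: Step 2.} The delicate part is controlling the approximation of cosine ridges by sigmoids uniformly enough to keep the constant $2C^{(1)}$, and handling the limits inside the closed convex hull. I would first do the argument with step functions, using the fact that a function of bounded variation is a mixture of steps with total mass equal to its variation. I would then pass to sigmoids via dominated convergence in $l_2(\mu,B)$, since the $\mu$-measure of the transition region shrinks as the sigmoid steepens.

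For deeper fully connected networks, CNNs, RNNs and Transformers, I would not reprove anything. I would cite the corresponding Jackson-type results referenced earlier, since the lemma only asserts the existence of a decaying bound $E(\cdot,k)$.
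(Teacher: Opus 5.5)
Your proposal is correct and follows the same route as the paper. The paper gives no proof of this lemma and simply invokes Barron's bound via \cite{barron1994approximation}; your three steps (Fourier representation, closed convex hull of sigmoids with bound $2C^{(1)}$, Maurey--Jones--Barron sampling) and your $L_1(\mu)\le L_2(\mu)$ remark for Definition 2 are exactly the content of that citation. One detail in Step 2 needs tightening: the transition region of a steep sigmoid shrinks to the hyperplane $\{\mathbf{x}:\omega^{\top}\mathbf{x}=t\}$, not to a null set, and this hyperplane can carry positive $\mu$-mass for an arbitrary (e.g.\ empirical) $\mu$, so you should choose the step thresholds $t$ away from the countably many atoms of the projected measure, as Barron does.
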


Based on the error bound (15), we can prove the following Theorem 1.
\begin{theorem}[Comparison of Approximation Error]
\label{conclusion_approx}
With identical network depth and number of parameters for processing the real and imaginary parts of the signal, the supremum of the approximation error for CVNN, RVNN, and DualNN satisfies 
\begin{equation}
  \sup  \epsilon_{comp}^{approx} \le \sup \epsilon_{dual}^{approx} \le
   \sup \epsilon_{real}^{approx} .
\end{equation}
\end{theorem}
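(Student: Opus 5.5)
The plan is to compare the three hypothesis spaces by inclusion and then apply the Jackson-type bound of Lemma 1. The monotone dependence of $E(C^{(\alpha)}(\mathcal{F}^\star),k)$ on $k$ will then give the claimed ordering of the suprema.

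\textbf{Step 1: embed DualNN into CVNN.} I first show $\mathcal{H}_{dual}\subseteq\mathcal{H}_{comp}$. A complex weight $\mathbf{A}_l=\mathbf{A}_l^{R}+j\mathbf{A}_l^{I}$ acts on $\mathbf{x}_{real}+j\mathbf{x}_{imag}$ through the real block matrix
\begin{equation*}
\begin{pmatrix}\mathbf{A}_l^{R} & -\mathbf{A}_l^{I}\\ \mathbf{A}_l^{I} & \mathbf{A}_l^{R}\end{pmatrix}.
\end{equation*}
Setting $\mathbf{A}_l^{I}=\mathbf{0}$ and choosing a split-type activation $\sigma_l(u+jv)=\sigma_l(u)+j\sigma_l(v)$, the CVNN applies the same real network to both channels. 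This reproduces $(\mathbf{z}^{(I)},\mathbf{z}^{(Q)})=(\mathcal{F}_{dual}(\mathbf{x}_{real}),\mathcal{F}_{dual}(\mathbf{x}_{imag}))$ as the real and imaginary parts of its output. Hence every DualNN map is a CVNN map with the same depth and node count. Conversely, the CVNN's free $\mathbf{A}_l^{I}$ doubles the real parameter budget, which places it in $\mathcal{H}^{(2k)}$ of the nested family, whereas DualNN lies in $\mathcal{H}^{(k)}$.

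\textbf{Step 2: embed RVNN into DualNN.} Next I show $\mathcal{H}_{real}\subseteq\mathcal{H}_{dual}$ under equal parameter count. I take the RVNN in its usual form that only sees one component, or that concatenates I and Q under the same budget $k$. Any such RVNN is realized by a DualNN whose shared branch equals the RVNN and whose fusion head uses only $\mathbf{z}^{(I)}$, i.e.\ assigns zero weight to $\mathbf{z}^{(Q)}$. So DualNN contains RVNN at the same budget $k$. In addition, its fusion of $\mathbf{z}^{(I)}$ and $\mathbf{z}^{(Q)}$ gives it access to functions of both components that RVNN at that budget does not realize.

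\textbf{Step 3: conclude and identify the obstacle.} For nested spaces, $\inf_{f\in\mathcal{H}}\|\mathcal{F}^\star-f\|$ is non-increasing in $\mathcal{H}$. Taking the supremum over targets in $C^{(\alpha)}$ and invoking Lemma 1, whose bound $E(\cdot,k)$ is non-increasing in $k$ (so $E(\cdot,2k)\le E(\cdot,k)$), yields $\sup\epsilon_{comp}^{approx}\le\sup\epsilon_{dual}^{approx}\le\sup\epsilon_{real}^{approx}$.

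The main obstacle is Step 2 under a strict equal-parameter constraint. When RVNN concatenates I and Q into a single input of length $2N_L$, its first layer can mix the two components arbitrarily, and DualNN's shared weights cannot reproduce that mixing at the same budget. So the inclusion is not automatic. My first attempt is to read the fairness constraint as equal nodes per channel, as the paper's $k$ versus $2k$ capacity convention suggests. Under that reading, the concatenated RVNN's cross-terms are counted against its budget, and the comparison reduces to the inclusion argument above. If that fails, I would argue at the level of the Lemma 1 bounds rather than the spaces themselves, by comparing effective budgets. The sup-level statement only needs the bounds to be ordered, not the hypothesis spaces to be nested.
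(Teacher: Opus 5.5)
Your proposal has a genuine gap. The two inclusions are proved for two different DualNN models, and neither model supports both.

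\textbf{Step 1 works only in the two-channel-output model.} Step 1 takes the DualNN output to be $\mathcal{F}_{dual}(\mathbf{x}_{real})+j\mathcal{F}_{dual}(\mathbf{x}_{imag})$, which is the model implicit in the paper's Appendix A. In that model the CVNN embedding with $\mathbf{A}_l^{I}=\mathbf{0}$ is correct, provided the CVNN's preset activations are split-type. You must assume this rather than choose it: for modReLU- or cardioid-type activations, zeroing $\mathbf{A}_l^{I}$ does not decouple the channels.

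\textbf{Step 2 fails in that same model.} To realize a real-output RVNN $g(\mathbf{x}_{real})$, the shared map would have to equal $g$ on the I-channel and vanish on the Q-channel. That forces $g=0$ on the support of $\mathbf{x}_{imag}$. Under the circular symmetry the paper appeals to, this support coincides with that of $\mathbf{x}_{real}$, so only the trivial RVNN embeds.

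\textbf{Adding a fusion head breaks Step 1.} Step 2 works only after you add a fusion head that discards $\mathbf{z}^{(Q)}$. Then the DualNN output is a real-valued head applied to $[\mathbf{z}^{(I)};\mathbf{z}^{(Q)}]$, whose first layer mixes the channels by an arbitrary real-linear map, and Step 1 no longer covers it. A complex layer $\mathbf{B}=\mathbf{B}^R+j\mathbf{B}^I$ acts through the structured block $\begin{pmatrix}\mathbf{B}^R & -\mathbf{B}^I\\ \mathbf{B}^I & \mathbf{B}^R\end{pmatrix}$. It can therefore reproduce such a head only in the real part of its output, so you would need to fix a real-part readout and redo the embedding.

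\textbf{The concatenated-IQ RVNN stays open.} Your fallback cannot close it, because ordering the Lemma 1 upper bounds does not order the approximation errors themselves.

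\textbf{The paper's route.} The paper never uses inclusions. It splits $\mathcal{F}^{\star}=\mathcal{R}(\mathcal{F}^{\star})+j\mathcal{I}(\mathcal{F}^{\star})$ and applies Lemma 1 componentwise:
RVNN: $E(C^{(\alpha)}(\mathcal{R}(\mathcal{F}^{\star})),k)+\|\mathcal{I}(\mathcal{F}^{\star})\|$;
CVNN: $E(C^{(\alpha)}(\mathcal{F}^{\star}),2k)$;
DualNN: $E(C^{(\alpha)}(\mathcal{R}(\mathcal{F}^{\star})),k)+E(C^{(\alpha)}(\mathcal{I}(\mathcal{F}^{\star})),k)$.
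It then lets $k\to\infty$, so the CVNN and DualNN bounds vanish while RVNN keeps the residual $\|\mathcal{I}(\mathcal{F}^{\star})\|$. Here the RVNN deficit sits in its real-valued output rather than in missing Q input, which is why the concatenation question never arises. The cost is that $\sup\epsilon$ is effectively identified with the Jackson bound, and the RVNN comparison is only asymptotic in $k$.

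\textbf{How to keep your route.} Commit to one setting and state it as hypotheses: fusion head, real-part readout for the CVNN, split activations, and RVNN $=$ I-only network. Then both inclusions hold, and monotonicity of the infimum over nested classes gives the ordering pointwise in the target and for every $k$; Lemma 1 is not needed. That is stronger than the paper's conclusion, but it excludes concatenated-input RVNNs.
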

\begin{proof}
See Appendix A.
\end{proof}

\subsection{Estimation Error}
\begin{definition}[Estimation Error]
According to the analysis in \cite{barron1994approximation}, the estimation error for the neural network can be expressed as 
\begin{equation}
   \epsilon^{est} =  \underset{(\mathbf{x}, y) \sim \mathcal{D}}{\mathbb{E}}[{\cal L}(\mathcal{F}(\mathbf{x}), y)] -  \frac{1}{N} \sum_{i=1}^{N} {\cal L}(\mathcal{F}(\mathbf{x_i}), y_i).
   \label{eq_geb_real}
\end{equation}
\end{definition}
The upper bounds of the error (\ref{eq_geb_real}) are architecture-dependent, varying with different network designs. We first present the bound for RVNN through Lemma 2.

\begin{lemma}[Estimation Error for RVNN]
In the real-valued case \cite{barron1994approximation, Bartlett2017}, for any \( \delta \), with probability at least \( 1 - \delta \), 
\begin{equation}
\begin{array}{l}
 \epsilon_{real}^{est }
 \le \frac{{8M}}{{{N^{\frac{3}{2}}}}} + \frac{{36{{\left\| {\bf{x}} \right\|}_2}\sqrt {2\ln (\sqrt2 W)} \ln (N){R_{\cal A}^{real}}}}{N} + 3M\sqrt {\frac{{\ln \frac{1}{\delta }}}{{2N}}} 
\end{array} ,
\label{eq_geb_dual}
\end{equation}
where $N$ is the amount of instances,
$M$ represents the upper boundary for the loss function, i.e., ${\cal L}(\mathcal{F}(\mathbf{x}),y)\le M$ for any $(\mathbf{x},y)$.
$W$ stands for the largest size of hidden layers, i.e., the maximum of $\{d_1,d_2,\cdots,d_L\}$.
$R_{\cal A}^{real}$ is the spectral complexity of the RVNN $\mathcal{F}_{real}$ (details presented in Appendix B).
\end{lemma}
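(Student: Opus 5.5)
The bound in Lemma 2 is a standard result imported from \cite{barron1994approximation, Bartlett2017}, so the plan is to assemble it from a Rademacher-complexity generalization bound and a covering-number estimate for spectrally normalized networks. I will not derive anything new.

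\textbf{Step 1: uniform deviation bound.} The loss is bounded, $0\le{\cal L}\le M$. The standard symmetrization plus McDiarmid argument then gives, with probability at least $1-\delta$, uniformly over the hypothesis class ${\cal H}$,
\begin{equation*}
\epsilon^{est}_{real}\le 2\,\widehat{\mathfrak{R}}_N({\cal L}\circ{\cal H})+3M\sqrt{\frac{\ln(1/\delta)}{2N}}.
\end{equation*}
Here $\widehat{\mathfrak{R}}_N$ is the empirical Rademacher complexity on the $N$ training instances. This step accounts for the last term of (18).

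\textbf{Step 2: Dudley entropy integral.} Next I bound $\widehat{\mathfrak{R}}_N$ by the Dudley integral
\begin{equation*}
\widehat{\mathfrak{R}}_N\le \inf_{\alpha>0}\Big(\frac{4\alpha}{\sqrt N}+\frac{12}{N}\int_\alpha^{\sqrt N M}\sqrt{\ln{\cal N}(\epsilon)}\,d\epsilon\Big),
\end{equation*}
where ${\cal N}(\epsilon)$ is the $\ell_2$ covering number of the loss class on the sample.

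\textbf{Step 3: covering number (main obstacle).} I follow the layer-peeling argument of \cite{Bartlett2017}:
\begin{itemize}
\item For each layer, cover the set of matrices $\mathbf{A}_l$ with bounded $(2,1)$-norm via Maurey's sparsification lemma. This gives a log-covering number of order $\frac{\|\mathbf{x}\|_2^2\,b_l^2}{\epsilon_l^2}\ln(2W^2)$.
\item Compose the layer covers through the $\rho_l$-Lipschitz activations, tracking how errors propagate through the spectral norms of the downstream layers.
\item Optimize the per-layer budget allocation $\epsilon_l$.
\end{itemize}
The result is $\ln{\cal N}(\epsilon)\le \frac{\|\mathbf{x}\|_2^2\ln(2W^2)\,(R^{real}_{\cal A})^2}{\epsilon^2}$, with the spectral complexity
\begin{equation*}
R^{real}_{\cal A}=\prod_l\rho_l\|\mathbf{A}_l\|_\sigma\Big(\sum_l \|\mathbf{A}_l^\top\|_{2,1}^{2/3}/\|\mathbf{A}_l\|_\sigma^{2/3}\Big)^{3/2}.
\end{equation*}
The delicate parts are the Maurey-type cover and the bookkeeping of Lipschitz constants through composition. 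I would take both directly from \cite{Bartlett2017}, with the definition of $R^{real}_{\cal A}$ as in Appendix B.

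\textbf{Step 4: constants.} Substituting the covering bound into the Dudley integral makes the integrand proportional to $1/\epsilon$, so the integral evaluates to a logarithm. Choosing $\alpha=1/N$ then produces:
\begin{itemize}
\item the $\frac{8M}{N^{3/2}}$ term, from the $\frac{4\alpha}{\sqrt N}$ boundary term after doubling in Step 1 and scaling by the loss bound;
\item the $\frac{36\|\mathbf{x}\|_2\sqrt{2\ln(\sqrt2W)}\ln(N)R^{real}_{\cal A}}{N}$ term, using $\ln(2W^2)=2\ln(\sqrt2W)$ and $\ln(\sqrt N M\cdot N)\lesssim\ln N$.
\end{itemize}
Matching the paper's exact constants may require absorbing $M$ and the Lipschitz constant of ${\cal L}$ into $R^{real}_{\cal A}$. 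I expect this to be routine algebra rather than a conceptual difficulty.
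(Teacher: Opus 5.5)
Your route is the same as the paper's. The paper gives no argument of its own for Lemma 2; it simply imports the bound from Bartlett et al., whose proof is exactly the symmetrization/McDiarmid step, the Dudley integral, and the Maurey-based layer-wise cover you outline. Step 1 is correct with $\ln(1/\delta)$. A single application of McDiarmid to $\sup_f(\mathbb{E}\mathcal{L}-\widehat{\mathbb{E}}\mathcal{L})-2\widehat{\mathfrak{R}}_N$ works: its bounded differences are $3M/N$ and its mean is at most $0$, so it gives exactly $3M\sqrt{\ln(1/\delta)/(2N)}$.

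Step 4, however, does not give the stated inequality as written, and the fix is not to absorb $M$ into $R^{real}_{\mathcal A}$. With $\alpha=1/N$ the boundary term is $8/N^{3/2}$, not $8M/N^{3/2}$, and the integral leaves $\ln(N^{3/2}M)$. You therefore get a different bound, with an extra $\ln M$ in the middle term. Take $\alpha=M/N$ instead. Then $2\cdot 4\alpha/\sqrt N=8M/N^{3/2}$ and $\ln(\sqrt N M/\alpha)=\tfrac32\ln N$, so the middle term is exactly $2\cdot 12\cdot\tfrac32=36$ times $\|X\|_2\sqrt{\ln(2W^2)}\,R^{real}_{\mathcal A}\ln N/N$, and $M$ cancels.

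What is \emph{not} routine algebra is the Lipschitz constant $\kappa$ of $\mathcal{L}(\cdot,y)$ in the network output. Covering the loss class costs a factor $\kappa$ in the middle term. So the lemma as stated, with $R^{real}_{\mathcal A}$ as defined in Appendix B, needs the unstated hypothesis $\kappa\le 1$; in Bartlett et al.\ this factor is carried by the margin $\gamma$. It cannot be dropped: replacing $\mathcal L$ by $c\mathcal L$ and letting $c\to\infty$ would otherwise yield a uniform bound with no complexity term at all.

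Finally, the covering argument forces two readings of the statement:
\begin{itemize}
\item $\|\mathbf{x}\|_2$ must be the Frobenius norm $\|X\|_2$ of the whole $N$-sample data matrix, so for inputs of bounded norm the middle term is really of order $\ln N/\sqrt N$.
\item $R^{real}_{\mathcal A}$ must be evaluated at the norm budgets $s_l\ge\|\mathbf A_l\|_\sigma$, $b_l\ge\|\mathbf A_l^\top-\mathbf M_l^\top\|_{2,1}$ that define the hypothesis class, because a covering number belongs to a class, not to one network. Passing to the trained network's own norms requires Bartlett et al.'s union bound over a grid of budgets, which adds logarithmic terms absent from the statement.
\end{itemize}
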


Similarly to RVNN, the estimation error bounds of CVNN and DualNN follow the same expression as given by Lemma 3 and Corollary 1 respectively, differing only in their model-specific spectral complexity terms.

\begin{lemma}[Estimation Error for CVNN]
The estimation error boundary for the CVNN \cite{chen2023spectral} can be expressed as
\begin{equation}
\begin{array}{l}
 \epsilon_{comp}^{est}  \le \frac{{8M}}{{{N^{\frac{3}{2}}}}} + \frac{{36{{\left\| {\bf{x}} \right\|}_2}\sqrt {2\ln (2 W)} \ln (N){R_{\cal A}^{comp}}}}{N} + 3M\sqrt {\frac{{\ln \frac{2}{\delta }}}{{2N}}} 
\end{array} .
\label{eq_geb_comp}
\end{equation}
where $R_{\cal A}^{comp}$ is the spectral complexity of the CVNN $\mathcal{F}_{comp}$ .
\end{lemma}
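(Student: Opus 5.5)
The plan is to reduce Lemma 3 to the real-valued bound of Lemma 2 by realifying the CVNN, and then to track how the constants change under that reduction. First I identify $\mathbb{C}^{d}$ with $\mathbb{R}^{2d}$ through $\mathbf{z}\mapsto[\mathcal{R}(\mathbf{z});\mathcal{I}(\mathbf{z})]$. Each complex weight matrix $\mathbf{A}_l=\mathbf{A}_l^{R}+j\mathbf{A}_l^{I}$ then becomes the real block matrix
\begin{equation*}
\tilde{\mathbf{A}}_l=\begin{bmatrix}\mathbf{A}_l^{R} & -\mathbf{A}_l^{I}\\ \mathbf{A}_l^{I} & \mathbf{A}_l^{R}\end{bmatrix}\in\mathbb{R}^{2d_{l-1}\times 2d_l}.
\end{equation*}
This block matrix has the same spectral norm as $\mathbf{A}_l$, because the map is a unitary-equivalent realification. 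Its $(2,1)$-type norm is controlled by the corresponding complex norm up to a constant factor. I will assume split-type activations, so that each complex $\rho_l$-Lipschitz $\sigma_l$ acts as a real $\rho_l$-Lipschitz map on $\mathbb{R}^{2d_l}$. Under these choices $\mathcal{F}_{comp}$ is exactly a real network of the form (\ref{eq_fa}), with widths $2d_l$ and maximal width $2W$. I define $R_{\mathcal{A}}^{comp}$ as the Bartlett spectral complexity $\prod_l\rho_l\|\tilde{\mathbf{A}}_l\|_\sigma\bigl(\sum_l\|\tilde{\mathbf{A}}_l^{\top}\|_{2,1}^{2/3}/\|\tilde{\mathbf{A}}_l\|_\sigma^{2/3}\bigr)^{3/2}$ of these realified matrices, as in \cite{chen2023spectral}.

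Next I rerun the covering-number argument of \cite{Bartlett2017} on the realified network. The matrix covering lemma gives a log covering number of order $\|\mathbf{x}\|_2^2\ln(2(2W)^2)(R_{\mathcal{A}}^{comp})^2/\varepsilon^2$. The factor $\ln(2W^2)$ of the real case is thereby replaced by $\ln(8W^2)$, and this is exactly what turns $\sqrt{2\ln(\sqrt2W)}$ into $\sqrt{2\ln(2W)}$. I then plug this into Dudley's entropy integral with lower cutoff $\alpha=1/\sqrt{N}$ to bound the empirical Rademacher complexity of the loss class. This step produces the $8M/N^{3/2}$ term and the $36\|\mathbf{x}\|_2\sqrt{2\ln(2W)}\ln(N)R_{\mathcal{A}}^{comp}/N$ term, with the same arithmetic as in Lemma 2.

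Finally I convert the Rademacher bound into a high-probability bound on $\epsilon^{est}_{comp}$ using McDiarmid's inequality with losses bounded by $M$. The real and imaginary output channels, or equivalently the two symmetrization events, are each controlled with confidence $\delta/2$. A union bound then yields $3M\sqrt{\ln(2/\delta)/(2N)}$.

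I expect the main obstacle to be the covering step for the block-structured matrices. The cover must respect the coupled structure of $\tilde{\mathbf{A}}_l$, or at least I must show that covering the unconstrained $2d\times 2d$ class loses only the width factor and not extra norm factors. My first attempt will be to cover the pair $(\mathbf{A}_l^{R},\mathbf{A}_l^{I})$ jointly as a single $2d_{l-1}\times d_l$ real matrix. I will then note that $\|\tilde{\mathbf{A}}_l^{\top}\|_{2,1}$ is at most $\sqrt{2}$ times the norm of that stacked matrix, and absorb this constant into the definition of $R_{\mathcal{A}}^{comp}$.
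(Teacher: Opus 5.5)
\emph{Comparison with the paper.} Your route differs from the paper's. The paper gives no argument for Lemma 3; it imports the bound from \cite{chen2023spectral}, where it is derived directly for complex weights via Maurey sparsification and Dudley's integral. Realifying and rerunning \cite{Bartlett2017} is a legitimate alternative that reuses the real machinery wholesale, and $\|\tilde{\mathbf A}_l\|_\sigma=\|\mathbf A_l\|_\sigma$ is correct. However, the constant bookkeeping fails at exactly the steps that distinguish Lemma 3 from Lemma 2, so as written the proposal does not yield the stated inequality.

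\emph{The width factor.} Covering the unstructured width-$2W$ class gives $\ln(2(2W)^2)=\ln(8W^2)=2\ln(2\sqrt2\,W)$, not $2\ln(2W)=\ln(4W^2)$. Your main route therefore proves the bound with $\sqrt{2\ln(2\sqrt2\,W)}$, which is strictly weaker than the statement. The factor $\sqrt{2\ln(2W)}$ comes from using the complex structure in the Maurey step. Let $\mathbf Z=\mathbf Z^{R}+j\mathbf Z^{I}$ be the layer-input matrix, one row per sample. The realified output is then $X'B$ with $X'=\begin{bmatrix}\mathbf Z^{R}&-\mathbf Z^{I}\\ \mathbf Z^{I}&\mathbf Z^{R}\end{bmatrix}$ and $B=[\mathbf A_l^{R};\mathbf A_l^{I}]\in\mathbb R^{2d_{l-1}\times d_l}$. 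The matrix covering lemma then counts $2\cdot 2d_{l-1}d_l\le 4W^2$ signed atoms. That is your ``fallback'', and it is the step that produces the stated constant, not a patch for a structural obstacle.

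The obstacle you flagged is not real. The realified CVNN class is a subclass of the unstructured real class with the same spectral and $(2,1)$ bounds, and Bartlett's covers are external (error propagation uses only the spectral norms of the true matrices). Covering the superset is therefore valid; it merely costs you the constant inside the logarithm. In the fallback, $\|X'\|_F=\sqrt2\|\mathbf Z\|_F$, while the $(2,1)$ norm of $B$ equals the complex one. What you need is $\sqrt2\,\|B\|_{2,1}\le$ the $(2,1)$ norm of $\tilde{\mathbf A}_l$ entering your $R^{comp}_{\mathcal A}$. This holds with room to spare, because the realified $(2,1)$ norm is exactly twice the complex one. The inequality you propose to absorb bounds the other side, which is not what the argument requires.

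\emph{Dudley and confidence terms.} With Bartlett's normalization $\frac{4\alpha}{\sqrt N}+\frac{12}{N}\int_\alpha^{\sqrt N}\sqrt{\ln\mathcal N}\,d\epsilon$, the cutoff $\alpha=1/\sqrt N$ gives $4/N$ and a factor $12\ln N$. After doubling that is $8M/N$ and $24$, not the stated terms. You need $\alpha=1/N$, which gives $4/N^{3/2}$ and $\ln(\sqrt N/\alpha)=\tfrac32\ln N$, hence $2\cdot 12\cdot\tfrac32=36$. The $\ln(2/\delta)$ comes from two McDiarmid applications at level $\delta/2$: one for the uniform deviation, one to replace the expected by the empirical Rademacher complexity. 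It has nothing to do with real and imaginary output channels; the loss is real-valued and the same term arises in the real case.

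\emph{Hidden assumptions.} Like Lemma 2, the middle term tacitly assumes the loss is $1$-Lipschitz in the network output (Bartlett's bound carries $1/\gamma$ there) and that $\sigma_l(0)=0$. You should state both. The split-activation restriction can be dropped, since any complex activation that is $\rho_l$-Lipschitz in the Euclidean norm on $\mathbb C^{d_l}\cong\mathbb R^{2d_l}$ realifies directly.
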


\begin{corollary}[Estimation Error for DualNN]
The estimation error boundary for the DualNN can be expressed as
\begin{equation}
\begin{array}{l}
 \epsilon_{dual}^{est}  \le \frac{{8M}}{{{N^{\frac{3}{2}}}}} + \frac{{36{{\left\| {\bf{x}} \right\|}_2}\sqrt {2\ln (\sqrt2 W)} \ln (N){R_{\cal A}^{dual}}}}{N} + 3M\sqrt {\frac{{\ln \frac{1}{\delta }}}{{2N}}} 
\end{array} ,
\label{eq_geb_dual}
\end{equation}
where $R_{\cal A}^{dual}$ is the spectral complexity of the DualNN $\mathcal{F}_{dual}$.
\end{corollary}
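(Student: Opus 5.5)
The strategy is to realise the DualNN as an ordinary real-valued network on a suitably arranged input. Once that is done, the Bartlett-type spectrally normalised margin bound behind Lemma 2 applies directly, and the only quantity that changes is the spectral complexity.

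\textbf{Step 1: DualNN as a block-diagonal RVNN.} Stack the two channels as $\tilde{\mathbf{x}} = [\mathbf{x}_{real}^T, \mathbf{x}_{imag}^T]^T$. The DualNN layer $l$ with shared weight $\mathbf{A}_l$ then acts on $\tilde{\mathbf{x}}$ as the block-diagonal matrix $\tilde{\mathbf{A}}_l = \mathbf{I}_2 \otimes \mathbf{A}_l$. The elementwise activations $\sigma_l$ are applied coordinatewise and keep the same Lipschitz constants $\rho_l$. The final combination of $\mathbf{z}^{(I)}$ and $\mathbf{z}^{(Q)}$ (summation, concatenation, or a linear head) is one more real linear layer. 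Hence $\mathcal{F}_{dual}$ belongs to the class of real-valued networks in (\ref{eq_fa}) treated in Lemma 2. Its input norm is $\|\tilde{\mathbf{x}}\|_2 = \|\mathbf{x}\|_2$, the modulus of the complex signal, so the data term in the bound is unchanged.

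\textbf{Step 2: covering-number argument.} Run the same argument as in \cite{Bartlett2017}. The per-layer matrix covering lemma bounds the covering number of $\{\tilde{\mathbf{A}}_l \mathbf{Z}\}$. Because the hypothesis class is parametrised only by $\mathbf{A}_l$, we cover the shared parameters instead of the full block matrix. Two facts about the Kronecker structure are needed: $\|\mathbf{I}_2\otimes\mathbf{A}_l\|_\sigma = \|\mathbf{A}_l\|_\sigma$, and the $(2,1)$-norm of $\tilde{\mathbf{A}}_l$ is at most $2\|\mathbf{A}_l\|_{2,1}$. Absorbing these constants into $R_{\cal A}^{dual}$ gives the same Maurey-sparsification log-covering bound, with the width term $\ln(\sqrt{2}W)$ unchanged. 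The reason it stays $\sqrt{2}W$ rather than becoming $2W$ (as for the CVNN in Lemma 3) is that no separate real and imaginary parameter blocks have to be covered: one real cover suffices for both channels. For the same reason the confidence term keeps $\ln(1/\delta)$ and needs no union bound over two parameter sets.

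\textbf{Step 3: from covering numbers to the stated bound.} Feed the covering bound into Dudley's entropy integral to obtain a Rademacher complexity bound. Then apply the standard uniform deviation inequality for losses bounded by $M$, which produces the $3M\sqrt{\ln(1/\delta)/(2N)}$ term. Choosing the Dudley lower limit $\alpha = 1/N$ yields the $8M/N^{3/2}$ and $36\ldots\ln(N)/N$ terms, exactly as in Lemma 2, with $R_{\cal A}^{real}$ replaced by $R_{\cal A}^{dual}$.

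The main obstacle is Step 2: we must show rigorously that the shared-weight structure does not inflate the width factor or the number of covered parameters. I would handle it by building the cover directly on $\mathbf{A}_l$ acting on the stacked data matrix $[\mathbf{Z}^{(I)}, \mathbf{Z}^{(Q)}]$, which has $2N$ columns. This treats the two channels as extra samples through a single shared layer, so the standard real lemma applies verbatim, and only the data norm, bounded by $\|\mathbf{x}\|_2$, enters.
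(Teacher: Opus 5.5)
Your proposal is correct and follows the route the paper takes only implicitly: the paper regards $\mathcal{F}_{dual}$ as an RVNN and reuses the Bartlett--Foster--Telgarsky bound of Lemma 2 with $R_{\cal A}^{real}$ replaced by $R_{\cal A}^{dual}$, giving no further argument. Your covering of the shared $\mathbf{A}_l$ on the $2N$-column stacked data $[\mathbf{Z}^{(I)},\mathbf{Z}^{(Q)}]$ supplies the justification the paper omits for keeping the factor $\ln(\sqrt{2}W)$, since the block-diagonal view alone would give width $2W$; the only bookkeeping caveat is that a head acting on the concatenation $[\mathbf{z}^{(I)};\mathbf{z}^{(Q)}]$ has input width $2d_L$, which must be counted in $W$.
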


Based on the error bound (20) and (21), the following inequality is satisfied in the general case:
\begin{equation}
   \sup{\epsilon_{real}^{est}}<\sup {\epsilon_{comp}^{est }},
\end{equation}
Then we can prove the following Theorem 2.

\begin{theorem}[Estimation Error Comparison] 
\label{conclusion_est}
If the cost function $\cal L$ in (11) takes a quadratic form,  estimation errors of CVNN and DualNN exhibit the following relationship:
\begin{equation}
   \sup{\epsilon_{dual}^{est}}<\sup {\epsilon_{comp}^{est } }.
\end{equation}
\end{theorem}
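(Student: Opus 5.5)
The plan is to compare the right-hand sides of Lemma 3 and Corollary 1 term by term, because the supremum of each estimation error is exactly its bound. Both bounds share the leading term $8M/N^{3/2}$, where $M$ bounds the loss. Under the quadratic cost assumption, $M$ is the same for both architectures, since the outputs are real-valued labels and the same bounded loss is used. The remaining differences are three:
\begin{itemize}
\item the confidence term: $\ln\frac{1}{\delta}$ for DualNN versus $\ln\frac{2}{\delta}$ for CVNN;
\item the width factor: $\sqrt{2\ln(\sqrt2 W)}$ versus $\sqrt{2\ln(2W)}$;
\item the spectral complexities: $R_{\mathcal A}^{dual}$ versus $R_{\mathcal A}^{comp}$.
\end{itemize}
The first two comparisons are immediate, because $\ln$ is monotone: $\ln\frac{1}{\delta}<\ln\frac{2}{\delta}$ and $\ln(\sqrt2W)<\ln(2W)$. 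So the whole argument reduces to showing $R_{\mathcal A}^{dual}\le R_{\mathcal A}^{comp}$. Once that holds, every term in the DualNN bound is no larger than its CVNN counterpart, and the confidence term is strictly smaller, which gives the strict inequality.

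For the spectral complexity I would use the Bartlett-type definition from Appendix B,
\begin{equation*}
R_{\mathcal A}=\Bigl(\prod_{l=1}^{L}\rho_l\|\mathbf A_l\|_\sigma\Bigr)\Bigl(\sum_{l=1}^{L}\frac{\|\mathbf A_l^\top-\mathbf M_l^\top\|_{2,1}^{2/3}}{\|\mathbf A_l\|_\sigma^{2/3}}\Bigr)^{3/2}.
\end{equation*}

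The CVNN layer $\mathbf A_l=\mathbf A_l^{r}+j\mathbf A_l^{i}$ acts on $(\mathbf x_{real},\mathbf x_{imag})$ through the real block matrix
\begin{equation*}
\begin{pmatrix}\mathbf A_l^{r}&-\mathbf A_l^{i}\\ \mathbf A_l^{i}&\mathbf A_l^{r}\end{pmatrix}.
\end{equation*}
The DualNN layer with shared weights $\mathbf B_l$ acts through the block-diagonal matrix $\mathrm{diag}(\mathbf B_l,\mathbf B_l)$. This is exactly the CVNN block form with $\mathbf A_l^{i}=0$ and $\mathbf A_l^{r}=\mathbf B_l$, so DualNN is the sub-family of CVNN with vanishing imaginary weights.

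Two facts then do the work. First, the spectral norm of the block-diagonal matrix equals $\|\mathbf B_l\|_\sigma$. Second, the $(2,1)$ norm of the block matrix is monotone in adding the imaginary blocks, since every column norm $\sqrt{\|a^r\|^2+\|a^i\|^2}\ge\|a^r\|$. Taking suprema over the respective hypothesis classes with equal norm budgets, the CVNN class contains the DualNN class, so $\sup R_{\mathcal A}^{dual}\le\sup R_{\mathcal A}^{comp}$.

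The main obstacle is this spectral-complexity comparison. The ratio form of $R_{\mathcal A}$ is not monotone entrywise: adding $\mathbf A_l^i$ raises both the spectral norm in the denominator and the $(2,1)$ norm in the numerator. I would handle this by comparing suprema over classes with fixed spectral-norm and $(2,1)$-norm budgets, as in Bartlett's covering argument, rather than comparing pointwise. Under those constraints, class inclusion immediately gives the inequality of suprema.

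The equal-parameter condition of Theorem 1 also has to be reconciled. Treating the CVNN as having real and imaginary weight blocks at the same width as DualNN's shared block keeps the $W$ factors as stated in Lemma 3 and Corollary 1. Combining the three comparisons then yields $\sup\epsilon_{dual}^{est}<\sup\epsilon_{comp}^{est}$.
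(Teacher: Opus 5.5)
Your argument holds within the paper's framework, taking Lemma 3 and Corollary 1 as given, but it reaches the inequality by a different route than Appendix B. The paper does not use the constant differences between the two bounds at all. Instead it puts all the weight on the spectral complexity. It idealizes each trained layer as the closed-form least-squares solution of $\min_{\mathbf A_i}\|\mathbf A_i\mathbf z_{in}-\mathbf z_{out}\|^2$ for CVNN, and of $\min_{\mathbf A_i}\|\mathbf A_i\mathbf z_{in}^{(I)}-\mathbf z_{out}^{(I)}\|^2+\|\mathbf A_i\mathbf z_{in}^{(Q)}-\mathbf z_{out}^{(Q)}\|^2$ for DualNN. It then computes the top eigenvalues of $\mathbf A_i^{\rm H}\mathbf A_i$ in terms of the I/Q energies $\alpha$ and cross-correlations $\beta,\gamma$. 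Under extra conditions such as $\alpha_{in}^{(I)}>\beta$ and $\alpha_{out}^{(I)}>\gamma$, it shows $\lambda_{dual}<\lambda_{comp}$. Finally it asserts $R_{\mathcal A}^{dual}<R_{\mathcal A}^{comp}$, arguing that $\prod_i\|\mathbf A_i\|_\sigma$ dominates $R_{\mathcal A}$ while the ratio factor stays in a fixed interval. The quadratic-loss hypothesis enters the paper here, since it licenses the least-squares closed form; in your route it is essentially idle. You instead take strictness from $\ln\frac{1}{\delta}<\ln\frac{2}{\delta}$ and $\sqrt{2}W<2W$. You then need only a non-strict $R_{\mathcal A}$ comparison, which you get from the inclusion of DualNN among real-weight CVNNs with equal norm budgets. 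Relative to the stated lemmas, your route is cleaner: it needs no single-sample least-squares idealization, no conditions on the I/Q statistics, and no appeal to the ratio factor being nearly constant. The price is that your strict gap comes only from the extra factor $2$ in the constants of the separately derived complex-valued bound. With equal budgets the $R_{\mathcal A}$ terms coincide, so your argument says nothing about shared weights actually lowering spectral norms. That lowering is the mechanism the paper wants for its flatter-landscape interpretation. Two caveats apply to your proof. First, identifying a DualNN layer with a CVNN layer with $\mathbf A_l^{i}=0$ requires split (componentwise) complex activations and a matching readout. Second, like the paper, you read $\sup\epsilon^{est}$ as the right-hand side of the bound, so both arguments really compare upper bounds rather than the estimation errors themselves.
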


\begin{proof}
See Appendix B.
\end{proof}
\color{black}
From an optimization perspective, a tighter estimation bound on the spectral complexity $R_{\mathcal{A}}$ is directly linked to improved training stability and generalization performance. A large maximum eigenvalue (spectral norm) of the weight matrices amplifies small input perturbations across layers, thereby producing a loss surface characterized by sharp narrow valleys and steep ridges. Such a rugged landscape tends to trap gradient-based optimizers in spurious local minima, leading to high estimation error. In the proposed DualNN, the shared weight matrix $\mathbf{A}_i$ is jointly optimized to minimize the reconstruction error on both the I and Q channels. This joint objective imposes a mutual constraint: the learned transformation must perform well on two statistically identical yet distinct input streams. Consequently, the maximum eigenvalue is significantly restricted, such that $\lambda_{\text{dual}} < \lambda_{\text{comp}}$ (details in Appendix B). The practical effect is twofold. First, the reduced spectral norm at each layer lowers the global Lipschitz constant of the network, making the loss function smoother with respect to parameter variations. Second, a flatter loss surface offers wider basins of attraction around favorable minima, enabling gradient descent to converge to solutions with strong generalization ability rather than sharp, overfitted minima. Hence, the observed reduction in estimation error is directly attributable to the stabilized training induced by the spectral norm constraints inherent to DualNN.

\subsection{Summary}
Theoretical analysis above demonstrates that RVNN exhibits greater approximation error than both CVNN and DualNN, since the error contribution from the imaginary component of the optimal model cannot be fully eliminated. 
While RVNN demonstrates reduced estimation errors, CVNN and DualNN exhibit monotonically decreasing approximation errors with growing model complexity. The incremental benefits of RVNN's estimation accuracy are insufficient to offset its approximation error disadvantage relative to CVNN and DualNN, consequently yielding superior generalization performance for the latter architectures.
 
Although DualNN might not be able to approximate the true function as perfectly as CVNN, it attains smaller estimation errors \color{black} by mathematically guaranteeing a more stable optimization landscape. \color{black} When the model scale becomes sufficiently large, both DualNN and CVNN demonstrate comparably small approximation errors. Under these conditions, DualNN achieves a statistically significant reduction in overall generalization error.
This will be demonstrated through experimental results in Section V. Overall, DualNN achieves better generalization performance relative to CVNN and RVNN, as visually demonstrated in Fig.~\ref{fig:hypospace}.

\section{Dualformer: A Transformer-Based Implementation of DualNN}
The Transformer architecture offers inherent advantages over classical deep learning models (CNNs, RNNs), particularly in capturing long-range dependencies and facilitating parallel computations. Therefore, we adopt it as the core of our neural network and propose a specific implementation of DualNN: the Dual-Channel Transformer (Dualformer) method, characterized by its dual-channel input and shared Transformer model parameters.

The schematic illustration of Dualformer in blind communication signal analysis is presented in Fig.~\ref{fig:framework}. The framework adopts a sequential processing pipeline, comprising the following stages:
(1) IQ data instances are segmented into patches, which serve as tokens for subsequent processing (Section IV-A);
(2) Semantic features are extracted from each patch using a dual-channel Transformer architecture (Section IV-B);
(3) The extracted features are further processed by a task-specific projection head, adapted for AMR, SSR, and SSP tasks, respetively (Section IV-C).

Recent studies show Transformers are increasingly adopted for blind signal analysis due to their superior performance over conventional CNNs, RNNs, and hybrid approaches. The architecture's strengths include modeling long-range temporal dependencies, processing large-scale datasets, and extracting high-level features through contextual attention, enabled by its parallelizable self-attention mechanism that facilitates cross-domain feature integration, particularly advantageous in complex communication environments. Based on these advantages, we employ Transformer as our backbone architecture.

\begin{figure*}
    \centering
    \includegraphics[width=0.95\linewidth]{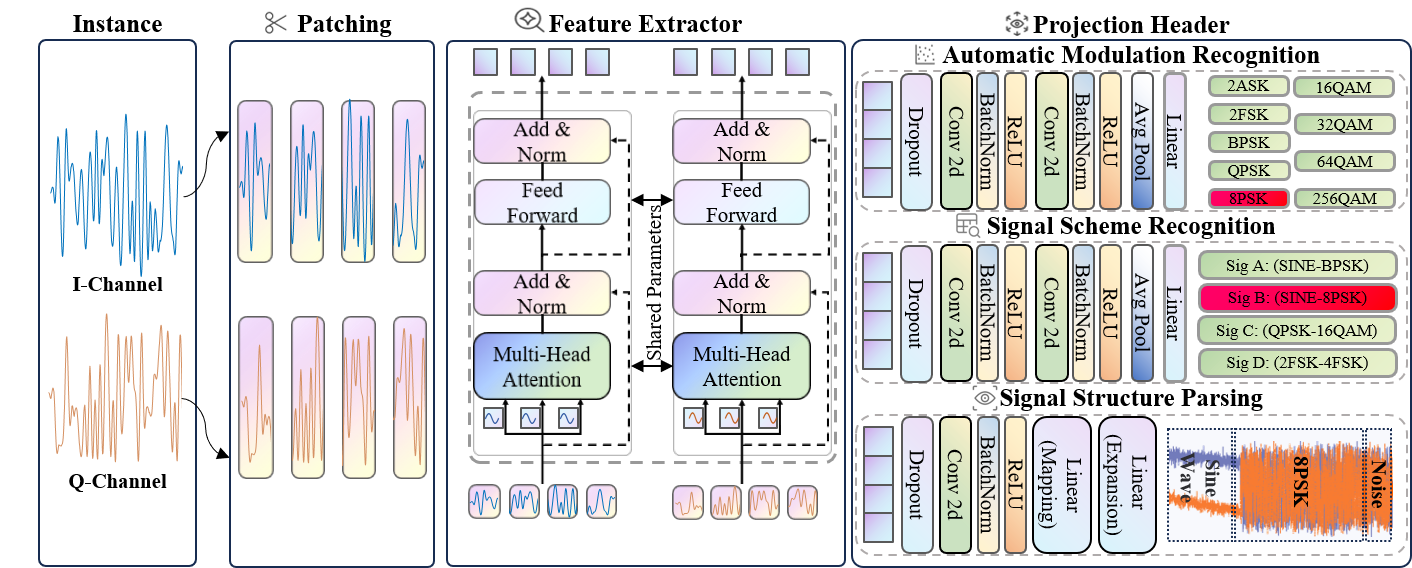}
    \caption{Pipeline of Dualformer-based 
    feature extraction in blind communication signal analysis.}
    \label{fig:framework}
\end{figure*}

\subsection{Patching as the Tokenizer}
We denote a complex-valued communication signal of length \( N_L \) starting at time index $1$ as \( \mathbf{x}_{1:N_L} = \mathbf{x}^{(I)}_{1:N_L} + j\mathbf{x}^{(Q)}_{1:N_L} = (x^{(I)}_1, \ldots, x^{(I)}_{N_L}) + j(x^{(Q)}_1, \ldots, x^{(Q)}_{N_L}) \). 
Each input communication signal instance $\mathbf{x} \in \mathbb{C}^{1 \times N_L}$ is first divided into patches with patch length \( P \), resulting in a sequence of patches \( \mathbf{x}_p \in \mathbb{C}^{P \times N_P} \) where \( N_P = \left\lfloor (N_L-P)/P \right\rfloor + 2 \) is the number of patches. 
With the use of patches, the number of input tokens reduces from \( N_L \) to approximately \( N_L / P \). 
This implies the memory usage and computational complexity of the attention map are quadratically reduced by a factor of \( P \). 
Under constrained training time and GPU memory, patch design allows the model to see longer historical sequence, which can significantly improve feature extraction capability for the communication signals.
\color{black}
It is worth noting that the choice of patch length $P$ introduces a fundamental trade-off. While a larger $P$ quadratically reduces computational complexity, it introduces coarse tokenization that could degrade boundary localization, which is particularly detrimental for fine-grained tasks like SSP. In this work, $P=16$ is selected as a heuristic sweet spot that balances sequence compression with temporal resolution. Furthermore, despite this coarse tokenization, the global self-attention mechanism effectively aggregates contextual cues across the entire sequence. This provides sufficient in-context information for the subsequent projection headers, thereby enabling them to accurately infer and localize high-resolution transition boundaries between signal segments.

\color{black}
\subsection{Architecture of Dualformer}
The two patches \( {\mathbf{x}_p}^{(I)} \in \mathbb{R}^{P \times N_P} \) and \( {\mathbf{x}_p}^{(Q)} \in \mathbb{R}^{P \times N_P} \), are fed into the Transformer backbone.
The Transformer encoder processes input patches through four key steps: Firstly, patches are projected into a $D$-dimensional latent space using trainable weights $\mathbf{W}_p \in \mathbb{R}^{D \times P}$.
Secondly, position encodings $\mathbf{W}_{pos} \in \mathbb{R}^{D \times N_P}$ are added to preserve temporal ordering:
    \begin{equation}
        \mathbf{x}_d^{(I)} = \mathbf{W}_p \mathbf{x}_p^{(I)} + \mathbf{W}_{pos}, \quad
        \mathbf{x}_d^{(Q)} = \mathbf{W}_p \mathbf{x}_p^{(Q)} + \mathbf{W}_{pos},
    \end{equation}
    where $\mathbf{x}_d^{(I)}, \mathbf{x}_d^{(Q)} \in \mathbb{R}^{D \times N_P}$ are the encoder outputs. 
Thirdly, multi-head attention transforms these into query/key/value matrices for each head.
Finally, a scaled dot-product is used to get the attention output.
The multi-head attention block also includes BatchNorm layers and a feed-forward network with residual connections. 
Afterwards it generates the representation denoted as \( \mathbf{z}^{(I)} \in \mathbb{R}^{D \times N_P} \) and \( \mathbf{z}^{(Q)} \in \mathbb{R}^{D \times N_P} \). 
After being processed by the Transformer encoders, the feature representations are concatenated as $\mathbf{z} = [\mathbf{z}^{(I)}; \mathbf{z}^{(Q)}]$ and fed into the projection header.

\subsection{Projection Header}
\subsubsection{Basic Module}
The projection header module performs task-specific feature mapping by employing a sequential architecture comprising two-dimensional (2D) convolutional layers, batch normalization (BN), rectified linear unit (ReLU) activation, pooling operations, and fully connected (FC) layers.
Taking a single layer as an example, we employ  convolution expressed as follows.
\begin{equation}
\begin{split}
\mathbf{z}_{out} &= \text{2DConv}\left(\mathbf{z}_{in}, [\mathbf{W}_{conv}, \mathbf{b}_{conv}]\right) \\
&= \mathbf{W}_{\text{conv}} \otimes_{\text{2D}} \mathbf{z} + b_{\text{conv}},
\end{split}
\end{equation}
where $\mathbf{W}_{conv}$ is the weights of the convolution kernel and $\mathbf{b}_{conv}$ is the bias vector, $\otimes_{\text{2D}}$ represents 2D convolution operation.

The BN layer can be written as
\begin{equation}
    \mathbf{z}_{bn} = \text{BN}\left(\mathbf{z}_{out}\right) = 
      \frac{\mathbf{z}_{out}-\mu}{\sqrt{\sigma^2+\epsilon}},
\end{equation}
where $\mu$ and $\sigma$ are the mean and standard variance of the input feature $\mathbf{z}_{out}$. $\epsilon$ denotes a small constant for numerical stability.

The ReLU layer can be written as
\begin{equation}
    \mathbf{z}_{relu} = \text{ReLU}\left(\mathbf{z}_{bn} \right) = \text{max} \left( \gamma \textbf{z}_{bn}, 0 \right),
\end{equation}
where $\gamma$ represents a learnable scaling parameter.

The AvgPool layer can be written as
\begin{equation}
\mathbf{z}_{avg} = \text{AVGPOOL}\left(\textbf{z}_{bf}\right) = \left({\frac{1}{N_{avg}^2}\sum\limits_{m=1}^{N_{avg}}\sum\limits_{n=1}^{N_{avg}}\textbf{z}_{bn}}\right),
\end{equation}
where $N_{avg}$ is the size of the AvgPool layer.

The FC layer outputs the prediction probability, which can be written as
\begin{equation}
    \textbf{p} = \text{FC}\left(\mathbf{z}_{avg} \right) = \textbf{z}_{avg}\textbf{W}_{fc}+\textbf{b}_{fc},
\end{equation}
where $\mathbf{W}_{fc}$ and $\mathbf{b}_{fc}$ are the weight and bias of the FC layer.
As different tasks emphasize specific feature representations and learning objectives, we implement specialized projection headers through customized layer configurations, as detailed below.

\subsubsection{Header V1: Coarse Projection Header}
For the AMR and SSR tasks, we employ a two-layer 2D convolutional network, each layer comprising a 2D convolution followed by BN and ReLU activation, to hierarchically extract discriminative global features. AvgPool layer is subsequently applied to enhance feature invariance, while the dual-channel architecture ensures effective information fusion of IQ data. 
\color{black}
Specifically, because AMR and SSR are coarse-grained tasks aimed at identifying global signal attributes, the two-layer CNN combined with spatial average pooling is explicitly designed to expand the receptive field. This architectural choice achieves the higher level of semantic abstraction required for these tasks.
\color{black}
The entire procedure can be described as

\begin{multline}
\textbf{p} = \text{FC}\circ\text{AVGPOOL}\circ\text{ReLU}\circ\text{BN}\circ\text{2DConv}\\
    \circ\text{ReLU}\circ\text{BN}\circ\text{2DConv}(\mathbf{z}).
\end{multline}
    
A Softmax layer is utilized to finish the final predictions.
The process can be written as

\begin{equation}
    \hat p_i = \frac{e^{p_i}}{{\sum\limits_{i = 1}^{K} {e^{p_i}} }},
\end{equation}
where $p_i$ denotes the $i$-th element of the probability vector $\mathbf{p}$ output by the softmax layer. 
$K$ represents the number of modulation types or signal schemes. Finally, the index of the maximum value represents the predicted label. 

\subsubsection{Header V2: Fine-grained Projection Header}
 
For the SSP task, we employ a single 2D convolutional layer with BN layer for local feature extraction, followed by two FC layers to generate probability outputs for each sampling point. 
\color{black}
Unlike AMR and SSR, SSP is a fine-grained task requiring precise point-by-point classification. Therefore, we deliberately utilize a single-layer CNN and remove all spatial pooling operations. This design rationale preserves maximum temporal resolution, preventing the blurring of critical transition boundaries between the preamble and data sections. The process can be expressed as:
\color{black}
\begin{equation}
\textbf{p} = \text{FC}\circ\text{FC}\circ\text{ReLU}\\
    \circ\text{BN}\circ\text{2DConv}(\mathbf{z}).
\end{equation}

A Softmax layer is utilized to finish the final predictions.
The process can be written as
\begin{equation}
    {\hat p}_n^{\{k,l\}} = \frac{\exp{({p_n^{\{k,l\}}})}}{{\sum\limits_{\{k,l\} = \{1,1\}}^{KL} {\exp{(p_n^{\{k,l\}}})} }},
\end{equation}
where $L$ represents the number of sections. $KL$ represents the actual number of composite annotations.
Finally, the prediction result can be expressed as
\begin{equation}
    {({\hat k},{\hat l})}  = \arg \max \left( {\left[ {{{{{\hat p}}}_1},{{{{\hat p}}}_2}, \cdots ,{{{{\hat p}}}_{KL}}} \right]} \right),
\end{equation}
where ${({\hat k},{\hat l})}$ is the composite predicted label, containing the section information and the modulation type.

\section{Experiment Results and Analysis}

\subsection{Dataset Design}

\color{black}
\subsubsection{Simple AMR}
The first dataset is a synthetic AMR dataset generated through simulation, specifically designed to verify the theoretical aspects of the proposed method in a controlled environment. This dataset includes five common digital modulation schemes, i.e., BPSK, QPSK, 8PSK, 16QAM, and 64QAM.
Each signal instance consists of 128 sampling points, with a symbol rate of 2.4 kbps and a sampling rate of 9.6 kHz. 
The SNR is fixed at 12 dB. For each modulation type, 4000 samples are generated, resulting in a total of 20,000 samples.

\subsubsection{AMR}
The second dataset employed in our experiments is the RML 2018.01A benchmark published by DeepSig \cite{o2018over}. Although synthetically generated, this dataset is engineered to closely emulate real-world communication scenarios by incorporating a wide range of channel impairments, noise levels, and propagation effects. It serves as a rigorous testbed for evaluating the robustness and generalization capability of deep learning-based AMR methods.
This dataset contains 24 distinct modulation types. For each modulation class, there are 4096 signal instances, and each instance consists of 1024 sampling points. 
The SNR and channel conditions vary across the dataset to simulate diverse operational environments, including multipath fading, frequency offsets, and additive white Gaussian noise. 

\color{black}

\subsubsection{SSR}
We simulated diverse signal schemes, each comprising two distinct components: a preamble section and a data section. The preamble designed for synchronization incorporates five modulation types, including a 0.1s continuous wave, BPSK/QPSK/OQPSK with 30 repetitions of ``100'' bit patterns for burst interference robustness, and 2FSK with 12 repetitions of "10" patterns. The data section employs five core digital modulations (BPSK, QPSK, 8PSK, 16QAM, and 4FSK), with 4FSK exclusively paired with 2FSK preambles, yielding 17 unique signal combinations. All schemes operate at 2.4 kbps bit rate with 300 ms duration, sampled at 9.6 kHz (generating 500 samples per signal scheme). To emulate realistic channel conditions, we introduce impairments including: a $\pm 9.6$ Hz frequency offset ($\pm 0.001$ normalized), uniformly distributed phase offset $\theta \sim \mathcal{U}[0,2\pi)$ radians, additive white Gaussian noise (AWGN) with SNR $\gamma \in \{-4, 0, 4, 8, 12\}$ dB, and random timing uncertainties $\Delta t \leq 1$ ms to simulate detection inaccuracies.

\subsubsection{SSP}
The SSP experimental dataset shares identical signal types and combinatorial patterns with the SSR benchmark (17 schemes total). The key modification is the intentional insertion of variable duration noise segments ($t_{\text{noise}} \in [0,1]$ ms) both preceding and succeeding each signal instance to deliberately increase the complexity of parser.
In the SSP task, where the model must output both signal segmentation results and their corresponding modulation types, we employ a composite annotation format ``XX-YYYY''. Here, ``XX'' represents the section identifier (''HD'' for preamble section and ''DA'' for data section), while ''YYYY'' denotes the modulation type (e.g., ''BPSK'', ''QPSK''). Notably, each sampling point in the instance has a composite annotation.

In all experiments presented in this paper, the dataset was randomly partitioned into training, validation, and test sets with proportions of 0.8, 0.1, and 0.1, respectively.

\subsection{Experiment Configuration and Training Methods}

We randomly divided each dataset into training, validation, and test sets in a ratio of 8:1:1. 
The network is optimized according to the results of the training and validation sets.
Training is conducted without a predetermined number of epochs but continues until no further improvements is observed in the performance on the validation set. 
Maximum \color{black}
epochs 
\color{black}
are 200 and the early stopping threshold is set as 15.
Additionally, the patch length is set as 16.
To ensure a fair comparison, all experiments were conducted in a standardized environment. The hardware configuration utilized an AMD 16-core CPU (4.0 GHz) and an NVIDIA RTX 4090 GPU, operating on Windows 10.
The models were implemented using Python 3.8.0 and the PyTorch 2.1.2 framework.

\subsection{Evaluation Metrics}

To comprehensively assess the performance of the proposed algorithm, we employ accuracy and F1-score to evaluate AMR and SSR task, while utilizing recognition quality (RQ), segmentation quality (SQ), and panoptic quality (PQ) to evaluate SSP task.

\subsubsection{Accuracy}
Accuracy is calculated as the ratio of correct predictions to the total number of correct predictions \cite{hossin2015review}, which can be expressed as
\begin{equation}
Acc  = \frac{|TP|+|TN|}{|TP|+|TN|+|FN|+|FP|},
\end{equation}
where $|TP|$, $|TN|$, $|FN|$, and $|FP|$ respectively represent the number of instances of true positive (TP), true negative (TN), false negative (FN), and false positive (FP).
The optimal accuracy score is 1.00, and the minimal score is 0.00.

\subsubsection{$\text{F}1$-score}
The F1-score is the harmonic mean of precision and recall, providing a balance between these two metrics \cite{hossin2015review}. 
The F1-score is particularly useful when the class distribution is imbalanced, calculated as
\begin{equation}
F_1 = 2 \times \frac{Precision \times Recall}{Precision + Recall},
\end{equation}
where precision and recall are defined as,
\begin{equation}
Precision = \frac{|TP|}{|TP| + |FP|}, \quad Recall = \frac{|TP|}{|TP| + |FN|}.
\end{equation}
The F1-score ranges from 0 (worst) to 1 (best), with 1 indicating perfect precision and recall.

\subsubsection{Recognition Quality}
RQ is the fraction of correctly identified instances, expressed as:
\begin{equation}
    RQ = \frac{|TP|}{|TP|+\frac{1}{2}|FP|+\frac{1}{2}|FN|}.
    \label{eq:dq}
\end{equation}
The best RQ is 1.00, and the worst is 0.00.

\subsubsection{Segmentation Quality}
SQ measures how well the predicted segments align with their matched ground truth segments, calculated as 
\begin{equation}
    SQ =  \frac{\sum_{(p,g) \in TP}IoU(p,g)}{|TP|} , 
    \label{eq:sq}
\end{equation}
Intersection over Union (IoU) calculates the overlap rate between the ``predicted box'' and ``real box'', expressed as
\begin{equation}
    IoU = \frac{{\left| {A \cap B} \right|}}{{\left| {A \cup B} \right|}}, 
\end{equation}
where $A$ and $B$ are respectively the ``predicted box'' and the ``true box''. $\left| {A \cap B} \right|$ stands for their intersection and $\left| {A \cup B} \right|$ represents their union. 
The best case is a perfect overlap when the ratio $IoU = 1.00$.

\subsubsection{Panoptic Quality}
PQ is a comprehensive metric used to evaluate the performance of panoptic segmentation models \cite{kirillov2019panoptic}. 
The mathematical expression for PQ is defined as the multiplication of SQ and RQ, expressed as
\begin{equation}
    PQ = RQ \times SQ = \frac{{\sum\limits_{(p,g) \in TP} I oU(p,g)}}{{|TP| + \frac{1}{2}|FP| + \frac{1}{2}|FN|}}.
\end{equation}
PQ effectively combines the aspects of how many correct detections were made (RQ) and how accurate these detections were in terms of segmentation (SQ).

\color{black}
\subsection{Advantages of DualNN}
To verify the advantages of the proposed Dual structure, we avoid using Transformer architectures and instead implement DualNN with a feature extractor built from a 3-layer 1D convolution, capped with a classification head for the final recognition task. For fair comparison, both CVNN and RVNN adopt the same 3-layer 1D convolutional backbone and classification head, differing only in how they utilize the I/Q data. To eliminate interference from the complexity of dataset and isolate the structural benefits, we first conduct theoretical validation experiments using the simple AMR dataset. In subsequent sections, we sequentially validate our method from three perspectives: network structural advantages, sample efficiency, and the effect of I/Q imbalance.

\subsubsection{Network Structural Advantage}

To validate the structural advantage of DualNN, we conducted a controlled comparison among four networks: RVNN, CVNN, DualNN, and DualNN‑Comparable.
Due to its weight sharing mechanism, the standard DualNN with a hidden dimension of 128 has approximately half the number of parameters and FLOPs of CVNN. A direct comparison under such a disparity would be inherently unfair.
To enable a fair comparison with CVNN, we design DualNN‑Comparable. This variant enlarges the parameter count of DualNN to a level comparable to that of CVNN by appropriately increasing its hidden dimension. Its parameter count and FLOPs are thus roughly equivalent to those of CVNN.
Note that all parameter counts are reported in terms of real valued parameters, where each complex weight is counted as two real parameters. The network complexity of all four models is summarized in Table~\ref{table_param_match}.

A full grid search was performed over the hyperparameter space for each network to ensure a fair comparison. 
The search space comprised learning rate $\in \{1\times10^{-4}, 3\times10^{-4}, 1\times10^{-3}\}$, dropout rate $\in \{0.0, 0.1, 0.3\}$, and weight decay $\in \{0.0, 1\times10^{-4}, 1\times10^{-3}\}$, yielding 27 trials per model. All models were trained using the Adam optimizer with a batch size of 32. Training ran for a maximum of 200 epochs with early stopping applied on validation accuracy (patience = 15). The optimal hyperparameter configuration for each model was selected as the trial achieving the highest validation accuracy, and a fixed random seed of 1 was used throughout to ensure reproducibility. The selected optimal hyperparameters and corresponding test accuracies are presented in Table~\ref{tab:optimal_param}.
The results demonstrate that DualNN-Comparable achieved the highest test accuracy of 99.70\%, while DualNN attained 99.50\% ranking second among all models, despite having nearly half the parameters and FLOPs of both DualNN-Comparable and CVNN. The training dynamics of all four networks are illustrated in Fig.~\ref{fig:training_curves}. Among all models, the CVNN exhibits the slowest convergence, with its best checkpoint achieved as late as epoch 84, indicating greater optimization difficulty. 
This observation corroborates Theorem~2, confirming that the shared-weight constraint significantly restricts the maximum eigenvalue (spectral norm) of the weight matrices, yielding a flatter loss surface with wider basins of attraction and thereby enhancing optimization stability.

\begin{table}[!t]
\renewcommand{\arraystretch}{1.3}
\caption{Network Complexity}
\label{table_param_match}
\centering
\begin{tabular}{lccccr}
\hline
Model  & Params. & FLOPs.  \\
\hline
RVNN  & 92,165 & 2.12 M  \\
DualNN  & 92,805 & 4.25 M  \\
CVNN  & 183,690 & 8.50 M  \\
DualNN-Comparable  & 186,009 & 8.56 M \\
\hline
\end{tabular}
\end{table}

\begin{table*}[!t]
\renewcommand{\arraystretch}{1.3}
\caption{\textcolor{black}{Optimal Hyperparameter Configurations and Test Accuracies} }
\label{tab:optimal_param}
\centering
\begin{tabular}{lccccr}
\hline
Model & Best Learning Rate & Best Dropout & Best Weight Decay & Best Epoch & Test Accuracy \\
\hline
RVNN  & $1\times10^{-3}$ & 0.0 & 0.0& 23& 96.87\% \\
DualNN & $3 \times 10^{-4}$ & 0.0 & $1\times10^{-4}$& 46& 99.50\% \\
CVNN &  $1\times10^{-3}$ & 0.0 & $1\times10^{-4}$& 84& 99.35\% 
\\
DualNN-Comparable & $3 \times 10^{-4}$ & 0.0 & $1\times10^{-4}$&  60 & 99.70\% 
\\
\hline
\end{tabular}
\end{table*}

\begin{figure}
    \centering
    \subfloat[]{\includegraphics[width=\linewidth]{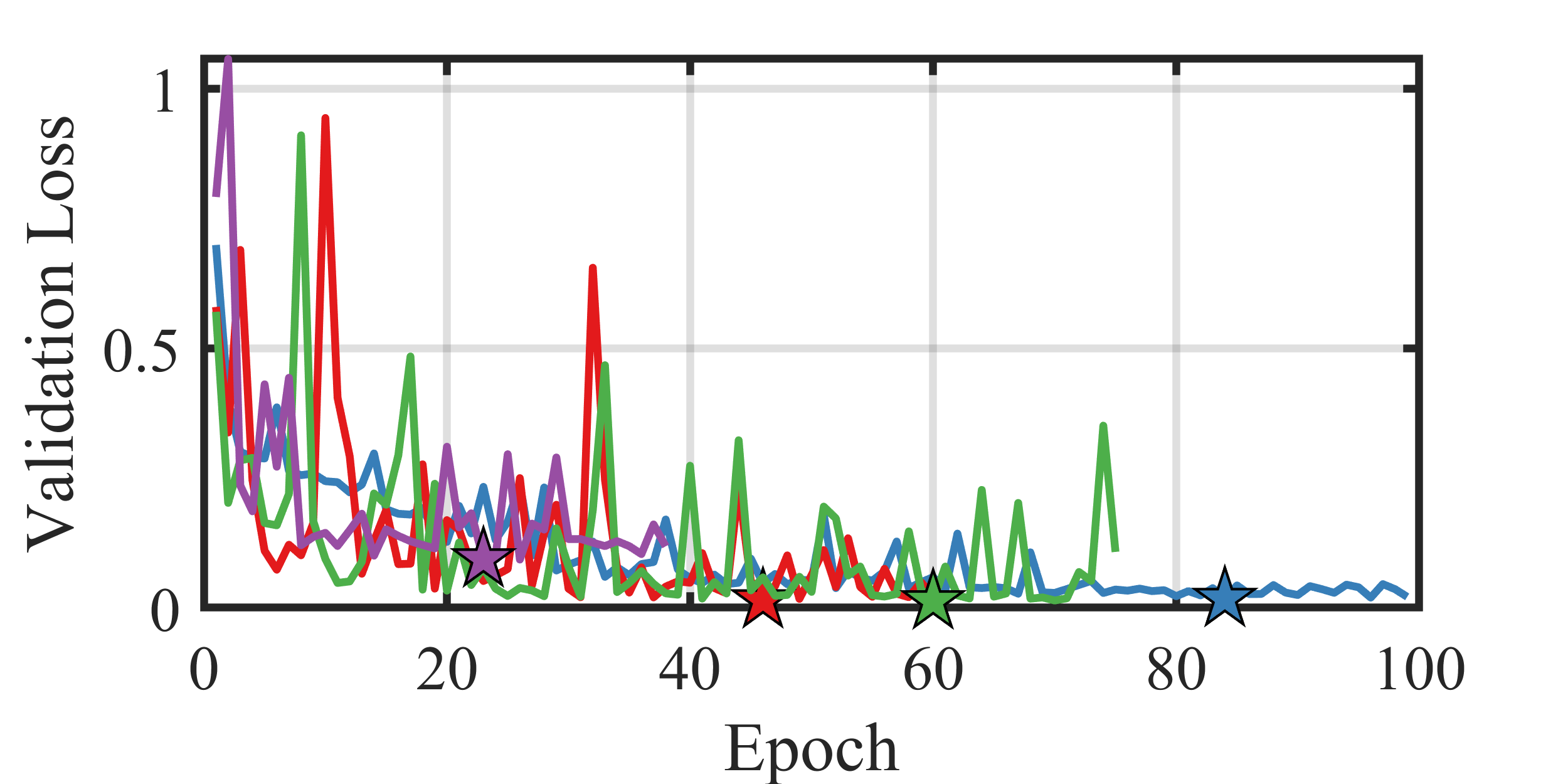}}
    \\
    \subfloat[]{\includegraphics[width=\linewidth]{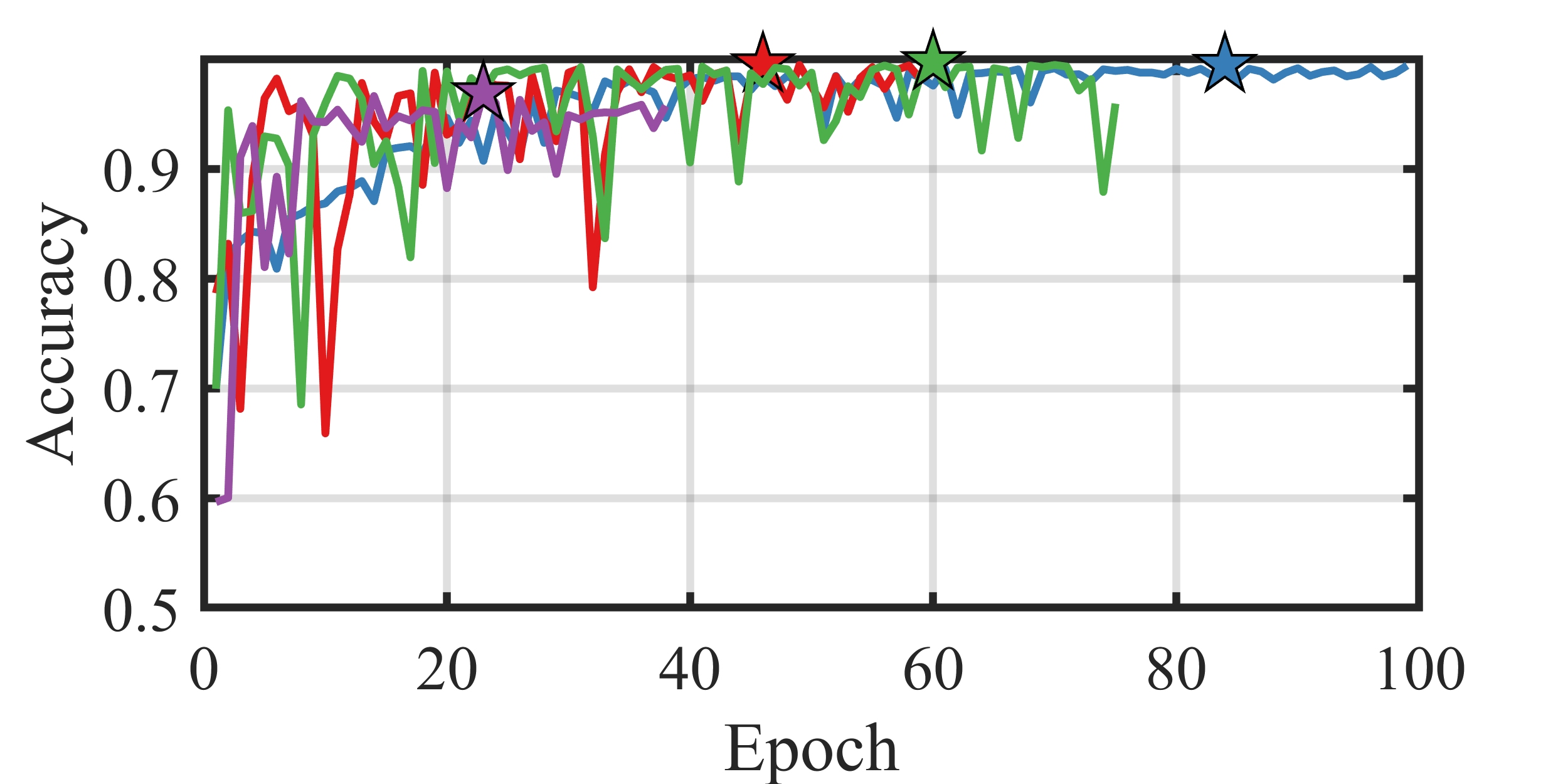}}
    \\
    \includegraphics[width=0.9\linewidth]{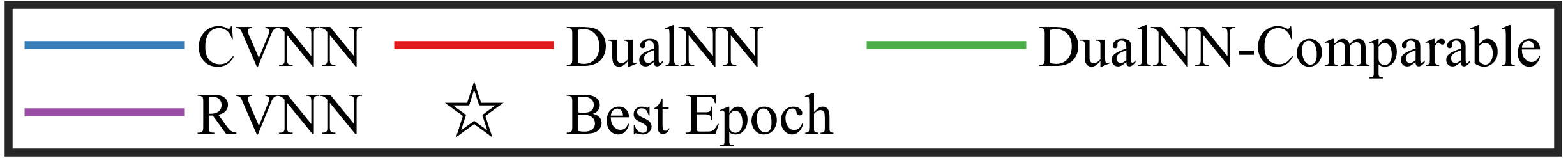}
    \caption{\textcolor{black}{Training dynamics of RVNN, CVNN, DualNN, and DualNN-Comparable under optimal hyperparameter configurations.}}
    \label{fig:training_curves}
\end{figure}

\subsubsection{Sample Efficiency}

To evaluate the sample efficiency of each model, we trained RVNN, CVNN, and DualNN under varying levels of data availability, specifically with 500, 1000, 2000, 4000, 6000, 8000, and 10000 samples per class. 
The results are presented in Fig.~\ref{fig:sample_efficiency}.
CVNN exhibited poor performance in the low-data regimes, achieving substantially lower accuracy at 500, 1000, and 2000 samples per class, suggesting that its complex-valued parameterization requires a relatively large number of samples to learn effectively. 
RVNN showed a progressive improvement as the number of training samples increased, though its performance remained notably sensitive to data availability. 
In contrast, DualNN demonstrated a markedly more stable accuracy across all data regimes. 
This confirms that the dual design effectively utilizes the intrinsic statistical symmetry of I/Q channels, allowing the model to learn meaningful features from significantly fewer observations.

\begin{figure}
    \centering
    \includegraphics[width=0.9\linewidth]{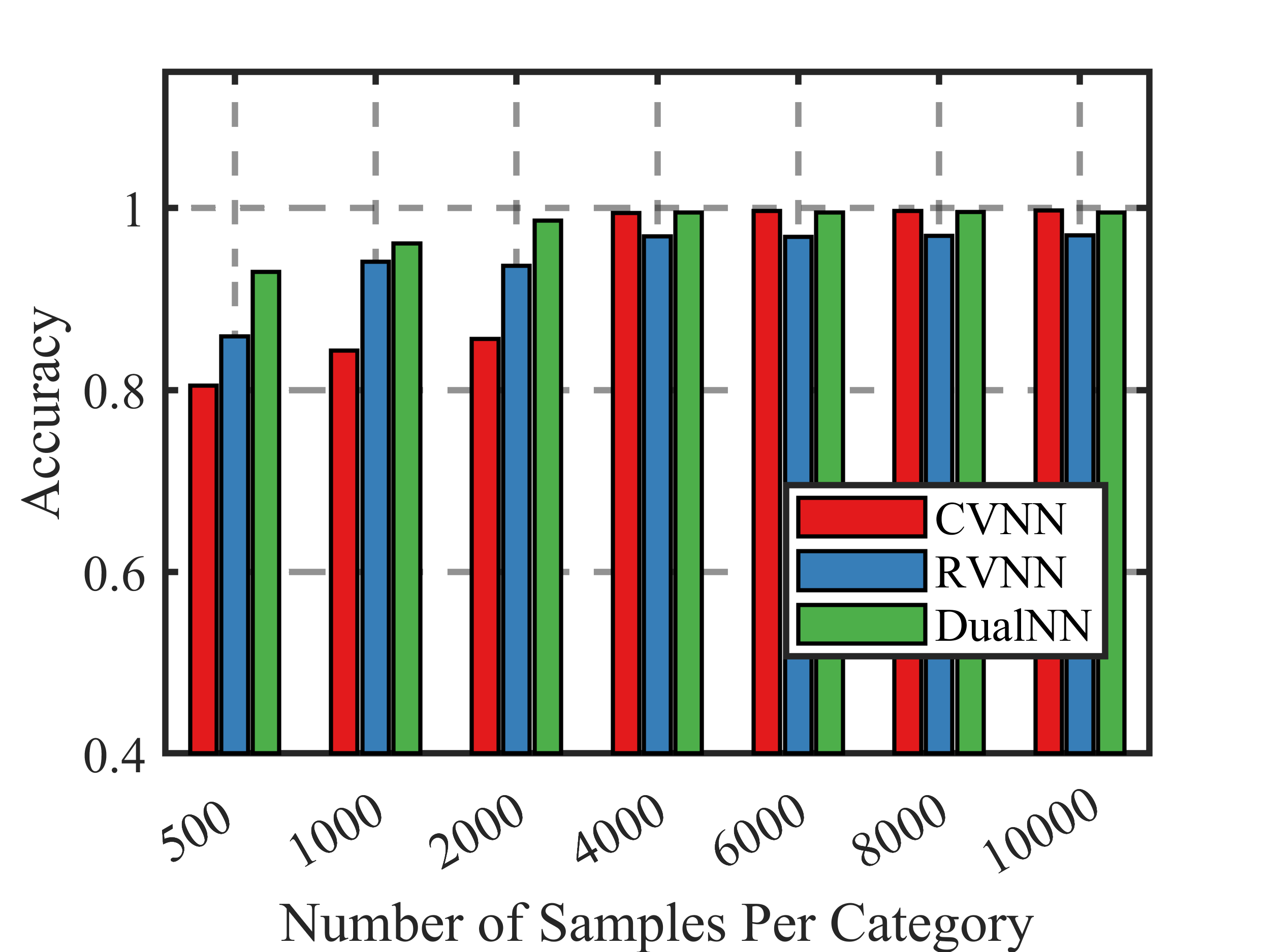}
    \caption{\textcolor{black}{Classification accuracy of RVNN, CVNN, and DualNN as a function of dataset size, evaluated at 7 data scales, i.e. 500, 1000, 2000, 4000, 6000, 8000, and 10000 samples per class.}}

    \label{fig:sample_efficiency}
\end{figure}

\subsubsection{Effect of I/Q Imbalance}

In practical communication systems, hardware imperfections inevitably introduce I/Q imbalance into the received signal \cite{li2013frequency, wong2019specific}. Under such conditions, the signal $x_c(n)$ in Eq.~(3) can be expressed as:
\begin{equation}
\begin{array}{c}
{{\tilde x}_c}\left( n \right) = (1 + \alpha ){x_{real}}\left( n \right)\cos \left( {2\pi {f_c}n + \phi } \right)\\
 - j{x_{imag}}\left( n \right)\sin \left( {2\pi {f_c}n} \right)
\end{array}
\end{equation}
where $\alpha$ denotes the amplitude mismatch factor and $\phi$ denotes the phase mismatch angle. For each sample, $\alpha$ and $\phi$ are independently and randomly drawn from their respective ranges, and five severity levels are defined to systematically assess robustness.

The test accuracy of DualNN, RVNN, and CVNN under five I/Q imbalance conditions is reported in Table~\ref{tab:iq_imbalance}. 
All three architectures exhibit monotonic degradation as imperfection severity increases, yet the magnitudes differ markedly. RVNN suffers the largest drop of 2.12\%, attributable to its discarding of the imaginary component, which reduces both baseline accuracy and resilience to distributional shift. 
The performance degradation of DualNN and CVNN reflects their joint exploitation of both I and Q channels.
The marginal advantage of CVNN under severe conditions is consistent with its complex-valued weights, which inherently couple I and Q and can partially compensate for channel asymmetry.
DualNN's shared real-valued parameterization implicitly assumes I/Q symmetry, incurring a slight additional penalty when that assumption is violated. Nonetheless, the gap between DualNN and CVNN remains within 0.10\% across all conditions, confirming that DualNN's parameter-efficient design entails no meaningful robustness cost. 
The absence of catastrophic collapse across all models reflects the tractability of the five-class task and the diversity afforded by 4000 training samples per class.

\color{black}

\begin{table}[!t]
\renewcommand{\arraystretch}{1.3}
\caption{{Effect of I/Q Imbalance.}}
\label{tab:iq_imbalance}
\centering
\begin{tabular}{lccccc}
\hline
\multirow{2}{*}{{Imperfection}}  & \multirow{2}{*}{{$\alpha$ Range}}  & \multirow{2}{*}{{$\theta$ Range}}  & \multicolumn{3}{c}{{Test Acccuracy}} \\
\cline{4-6}
 ~ & ~ & ~ & DualNN & RVNN & CVNN \\
\hline
Ideal & 0 & $0^\circ$ & 99.50\% & 96.87\% & 99.35\% \\
Mild & $[0, 0.03]$ & $[0^\circ, 2^\circ]$ & 99.11\% & 96.33\% & 99.21\%\\
Moderate & $[0, 0.06]$ & $[0^\circ, 4^\circ]$ & 98.96\% & 96.12\% & 99.06\%\\
Severe & $[0, 0.10]$ & $[0^\circ, 6^\circ]$ & 98.86\% & 95.93\%& 99.01\%\\
Extreme & $[0, 0.15]$ & $[0^\circ, 8^\circ]$ & 98.71\% & 94.75\%& 98.61\%\\
\hline
\end{tabular}
\end{table}

\subsection{Comparison with RVNN and CVNN in Different Tasks}
In this experiment, we examined the impact of dual-channel architecture on processing complex-valued signals. We compared this approach with both RVNNs and CVNNs across SNR ranging from -4 dB to 12 dB in three tasks: AMR, SSR, and SSP. 
To ensure fair comparison, we maintained the same network architecture across all implementations, varying only the channel configuration. Models processing only the I-channel with real-valued parameters are designated as ``Realformer'', while those processing the full complex-valued signal with complex-valued parameters are labeled ``Complexformer''.

The comparative performance of Dualformer versus RVNN and CVNN on the AMR, SSR, and SSP datasets (described in Section~V.A) across varying SNR levels is shown in Fig.~\ref{fig:exp1-amr}, Fig.~\ref{fig:exp1-ssr}, and Fig.~\ref{fig:exp1-ssp}, respectively. As shown in the figures, Dualformer exhibits superior performance throughout the entire SNR range in terms of both accuracy and F1-score metrics, particularly in the AMR task.
\color{black}
Notably, at an extremely low SNR of $-4$ dB, the performance degradation of Dualformer is substantially smaller than that of traditional CVNNs. In such severe noise conditions, the local amplitude and phase structures are heavily corrupted, causing conventional local CNN filters to fail. The robustness of Dualformer stems primarily from the self-attention mechanism, which can capture and aggregate long-range phase dependencies across the entire sequence, effectively reconstructing the underlying signal characteristics despite the heavy localized noise.
\color{black}

\begin{figure}[!t]
\centering
\subfloat[]{\includegraphics[width=0.9\linewidth]{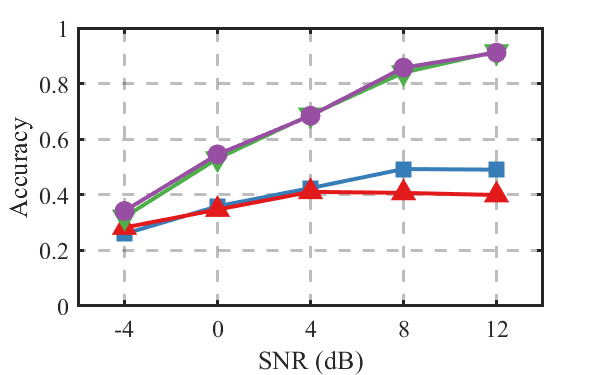}}
\\
\subfloat[]{\includegraphics[width=0.9\linewidth]{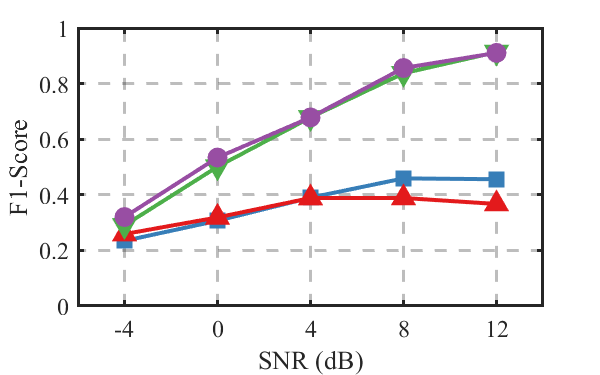}}
\\
\includegraphics[width=0.9\linewidth]{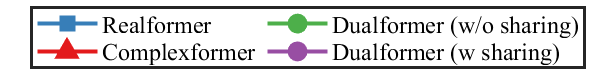}
\caption{Comparison of real-valued, complex-valued, and Dualformer architectures in AMR.
(a) Accuracy. 
(b) F1-score. 
 }
\label{fig:exp1-amr}
\end{figure}

\begin{figure}[!t]
\centering
\subfloat[]{\includegraphics[width=0.9\linewidth]{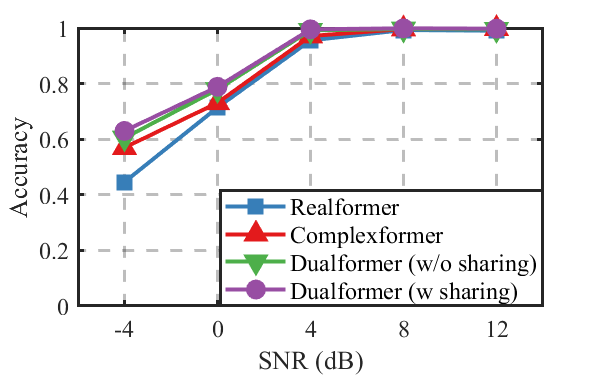}}
\\
\subfloat[]{\includegraphics[width=0.9\linewidth]{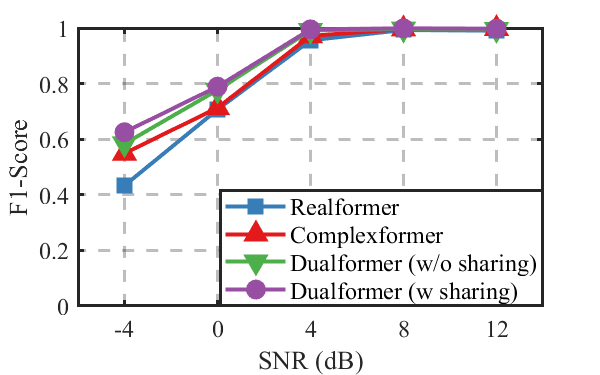}}
\caption{Comparison of real-valued, complex-valued, and  Dualformer architectures in SSR.
(a) Accuracy. 
(b) F1-score. 
 }
\label{fig:exp1-ssr}
\end{figure}

\begin{figure}[!t]
\centering
\subfloat[]{\includegraphics[width=0.9\linewidth]{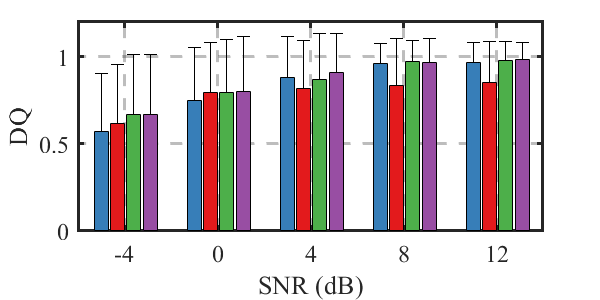}}
\\
\subfloat[]{\includegraphics[width=0.9\linewidth]{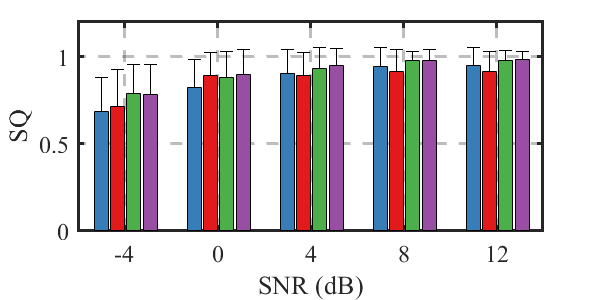}}
\\
\subfloat[]{\includegraphics[width=0.9\linewidth]{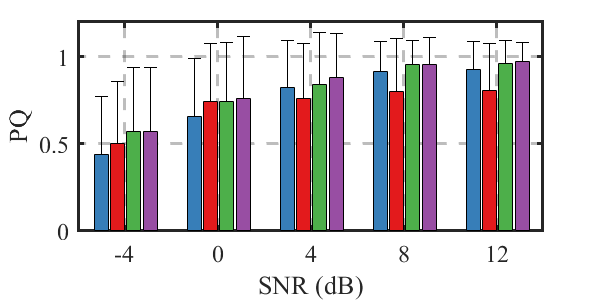}}
\\
\includegraphics[width=0.9\linewidth]{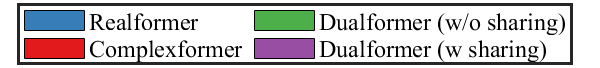}
\caption{Comparison of real-valued, complex-valued, and 
Dualformer architectures in SSP.
(a) RQ. 
(b) SQ.
(c) PQ.
 }
\label{fig:exp1-ssp}
\end{figure}

\subsection{Comparison with Benchmarks in Coarse-Grained Analysis}
\begin{table*}[!t]
\caption{Comparion in AMR.\label{tab:compare_amr}}
\centering
\resizebox{\textwidth}{!}{
\begin{tabular}{c|c|c|c|c|c|c|c|c|c|c}
\hline
\multirow{3}{*}{Method} & \multicolumn{10}{c}{SNR(dB)}                                                                                                \\ \cline{2-11} 
                        & \multicolumn{2}{c|}{-4} & \multicolumn{2}{c|}{0} & \multicolumn{2}{c|}{4} & \multicolumn{2}{c|}{8} & \multicolumn{2}{c}{12} \\ \cline{2-11} 
 &
  \multicolumn{1}{c|}{Accuracy} &
  \multicolumn{1}{c|}{F1-Score} &
  \multicolumn{1}{c|}{Accuracy} &
  \multicolumn{1}{c|}{F1-Score} &
  \multicolumn{1}{c|}{Accuracy} &
  \multicolumn{1}{c|}{F1-Score} &
  \multicolumn{1}{c|}{Accuracy} &
  \multicolumn{1}{c|}{F1-Score} &
  \multicolumn{1}{c|}{Accuracy} &
  F1-Score \\ \hline
CNN1 
&0.0567&	0.0349
&0.2361&	0.1992
&0.2726&	0.2346
&0.3919&	0.3461
&0.3420 &	0.2829
\\ \hline
CNN2
&0.3011&	0.2794
&0.5132&	0.4904
&0.6142&	0.5881
&0.7166&	0.6907
&0.7668&	0.7402
\\ \hline
LSTM
&0.0832&	0.0230
&0.3040&	0.2758
&0.6245&	0.5831
&0.6390&	0.6234
&0.8485&	0.8403
\\ \hline
ResNet
&0.0419&	0.0033
&0.4137&	0.3538
&0.5682&	0.5304
&0.6695&	0.6557
&0.7491&	0.7352
\\ \hline
Autoformer
&0.1982&	0.1662
&0.3269&	0.2749
&0.4719&	0.4426
&0.5121&	0.5013
&0.4086&	0.3617
\\ \hline
Informer
&\textbf{0.3577}	&\textbf{0.3414}
&0.5424	&0.5254
&0.6767	&0.6613
&0.8049	&0.8012
&0.8876	&0.8858
\\ \hline
Transformer
&0.3105&	0.2989
&0.5279&	0.5166
&0.6244&	0.6054
&0.7270&	0.7225
&0.8172&	0.8131
\\ \hline
Dualformer
&0.3419	&0.3200
&\textbf{0.5452}	&\textbf{0.5340}
&\textbf{0.6843}	&\textbf{0.6783}
&\textbf{0.8574}	&\textbf{0.8563}
&\textbf{0.9116}	&\textbf{0.9105}
\\ \hline
\end{tabular}
}
\end{table*}

\begin{table*}[!t]
\caption{Comparion in SSR.\label{tab:compare_amr}}
\centering
\resizebox{\textwidth}{!}{
\begin{tabular}{c|c|c|c|c|c|c|c|c|c|c}
\hline
\multirow{3}{*}{Method} & \multicolumn{10}{c}{SNR(dB)}           \\ \cline{2-11} 
& \multicolumn{2}{c|}{-4} & \multicolumn{2}{c|}{0} & \multicolumn{2}{c|}{4} & \multicolumn{2}{c|}{8} & \multicolumn{2}{c}{12} \\ \cline{2-11} 
 &
  \multicolumn{1}{c|}{Accuracy} &
  \multicolumn{1}{c|}{F1-Score} &
  \multicolumn{1}{c|}{Accuracy} &
  \multicolumn{1}{c|}{F1-Score} &
  \multicolumn{1}{c|}{Accuracy} &
  \multicolumn{1}{c|}{F1-Score} &
  \multicolumn{1}{c|}{Accuracy} &
  \multicolumn{1}{c|}{F1-Score} &
  \multicolumn{1}{c|}{Accuracy} &
  F1-Score \\ \hline
CNN1 
&0.2813	&0.2343
&0.4519	&0.3985
&0.5084	&0.4509
&0.5361	&0.4766
&0.5697	&0.5121
\\ \hline
CNN2
&0.3029	&0.2446
&0.4940	&0.4162
&0.5397	&0.4560
&0.5493	&0.4745
&0.7224	&0.6455

\\ \hline
LSTM
&0.0541	&0.0206
&0.0733	&0.0340
&0.0961	&0.0684
&0.1190	&0.0749
&0.1370	&0.0861

\\ \hline
ResNet
&0.1971	&0.1141
&0.3582	&0.2742
&0.3906	&0.2980
&0.3966	&0.2952
&0.4243	&0.3242

\\ \hline
Autoformer
&0.1791	&0.1390
&0.4171	&0.3549
&0.4339	&0.3754
&0.2837	&0.2186
&0.4339	&0.3987

\\ \hline
Informer
&0.3341	&0.2975
&0.6623	&0.6459
&0.9664	&0.9665
&0.9411	&0.9410
&0.9928	&0.9928

\\ \hline
Transformer
&0.3149	&0.2911
&0.6344	&0.6162
&0.9340	&0.9340
&0.9906	&0.9904
&0.9965	&0.9964
\\ \hline
Dualformer
&\textbf{0.6298}	&\textbf{0.6252}
&\textbf{0.7885}	&\textbf{0.7892}
&\textbf{0.9952}	&\textbf{0.9952}
&\textbf{0.9988}	&\textbf{0.9988}
&\textbf{0.9976}	&\textbf{0.9976}
\\ \hline
\end{tabular}
}
\end{table*}

To rigorously evaluate the performance of our algorithm in both AMR and SSR tasks, we conduct comparative experiments with four representative DL architectures and three state-of-the-art Transformer variants. The baseline methods include CNN1 \cite{o2016convolutional}, CNN2 \cite{tekbiyik2020robust}, LSTM \cite{rajendran2018deep}, ResNet \cite{liu2017deep}, Transformer \cite{rashvand2024enhancing}, Informer \cite{han2024lightweight}, and Autoformer \cite{wu2021autoformer}.

Tables IV and V summarize the comparative performance metrics (accuracy and F1-Score) of all evaluated methods for AMR and SSR tasks, respectively.
As illustrated in the table, Transformer architectures consistently surpass classical DL models in recognition accuracy. Particularly, the proposed Dualformer establishes new state-of-the-art performance among all other transformer variants across the entire SNR range, exhibiting monotonically improving performance gains with increasing SNR levels.
In summary, Transformers outperform classical DL models in feature extraction. However, optimal performance requires architectures specifically designed for signal data.

\subsection{Comparison in Fine-grained Analysis}
Since classical DL models cannot handle SSP tasks, we compare Dualformer only with Transformer-based methods. As shown in Fig.~\ref{fig:exp2-ssp}, Dualformer consistently outperforms these approaches in RQ, SQ, and PQ metrics across all SNRs (-4dB to 12dB), with the performance gap widening as SNR increases.

\begin{figure}[!t]
\centering
\subfloat[]{\includegraphics[width=0.9\linewidth]{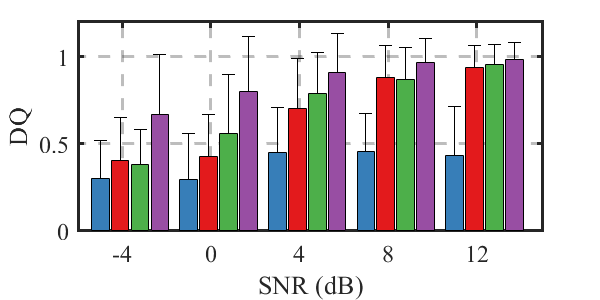}}
\\
\subfloat[]{\includegraphics[width=0.9\linewidth]{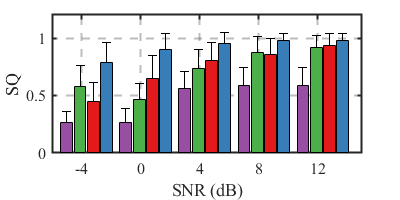}}
\\
\subfloat[]{\includegraphics[width=0.9\linewidth]{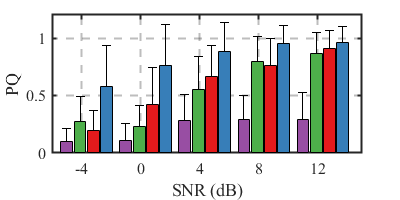}}
\\
\includegraphics[width=\linewidth]{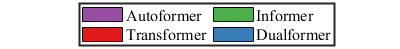}
\caption{Comparison of Transformer-based architectures in SSP.
(a) RQ. 
(b) SQ.
(c) PQ.
 }
\label{fig:exp2-ssp}
\end{figure}

\section{Conclusion}
Effective feature extraction is critical for blind signal analysis tasks such as AMR, SSR, and SSP. To effectively leverage the real and imaginary part of the complex-valued signals, we propose a novel dual-channel neural network framework, called DualNN, which designs a parameter-shared architecture across IQ channels. Theoretical analysis demonstrates that DualNN achieves reduced generalization error while maintaining expressive capacity. 
Furthermore, we introduce Dualformer, a Transformer-based architecture specifically designed to validate the advantages of DualNN. Extensive comparisons with three Transformer-based and four conventional DL-based methods validate the advantages of Dualformer, which achieves significant performance gains in AMR, SSR, and SSP.

DualNN provides a new paradigm for processing the complex-valued signals in the context of blind signal analysis. 
\color{black}
The modular design of DualNN presents exciting opportunities for integration into more advanced signal processing frameworks, particularly in tasks like blind source separation (BSS) and low-SNR spectrum sensing. For example, DualNN can be naturally integrated as a deep feature-extraction front-end within independent component analysis (ICA) architectures. By explicitly exploiting the shared temporal structures of the I and Q paths through parameter sharing, DualNN reduces the dimension of latent space, thereby projecting the mixtures into a superior, noise-resilient latent space for more accurate signal de-mixing. 
\color{black}
As such, this work lays a foundational step toward the broader application in blind signal processing, and opens up promising avenues for future research across a wide range of unsupervised and weakly supervised signal analysis scenarios.

\bibliographystyle{IEEEtran}
\bibliography{reference.bib}

\begin{thebibliography}{10}
\providecommand{\url}[1]{#1}
\csname url@samestyle\endcsname
\providecommand{\newblock}{\relax}
\providecommand{\bibinfo}[2]{#2}
\providecommand{\BIBentrySTDinterwordspacing}{\spaceskip=0pt\relax}
\providecommand{\BIBentryALTinterwordstretchfactor}{4}
\providecommand{\BIBentryALTinterwordspacing}{\spaceskip=\fontdimen2\font plus
\BIBentryALTinterwordstretchfactor\fontdimen3\font minus \fontdimen4\font\relax}
\providecommand{\BIBforeignlanguage}[2]{{%
\expandafter\ifx\csname l@#1\endcsname\relax
\typeout{** WARNING: IEEEtran.bst: No hyphenation pattern has been}%
\typeout{** loaded for the language `#1'. Using the pattern for}%
\typeout{** the default language instead.}%
\else
\language=\csname l@#1\endcsname
\fi
#2}}
\providecommand{\BIBdecl}{\relax}
\BIBdecl

\bibitem{liu2022integrated}
F.~Liu, Y.~Cui, C.~Masouros, J.~Xu, T.~X. Han, Y.~C. Eldar, and S.~Buzzi, ``Integrated sensing and communications: Toward dual-functional wireless networks for 6g and beyond,'' \emph{IEEE journal on selected areas in communications}, vol.~40, no.~6, pp. 1728--1767, 2022.

\bibitem{de2008blind}
P.~De and Y.-C. Liang, ``Blind spectrum sensing algorithms for cognitive radio networks,'' \emph{IEEE transactions on vehicular technology}, vol.~57, no.~5, pp. 2834--2842, 2008.

\bibitem{awin2018blind}
F.~Awin, E.~Abdel-Raheem, and K.~Tepe, ``Blind spectrum sensing approaches for interweaved cognitive radio system: A tutorial and short course,'' \emph{IEEE Communications Surveys \& Tutorials}, vol.~21, no.~1, pp. 238--259, 2018.

\bibitem{chowdhury20206g}
M.~Z. Chowdhury, M.~Shahjalal, S.~Ahmed, and Y.~M. Jang, ``6g wireless communication systems: Applications, requirements, technologies, challenges, and research directions,'' \emph{IEEE Open Journal of the Communications Society}, vol.~1, pp. 957--975, 2020.

\bibitem{11431942}
J.~Lei, Y.~Li, Z.~Wang, Q.~Lin, Y.-F. Liu, and Y.-C. Wu, ``A unified distributed algorithm for hybrid near-far field activity detection in cell-free massive mimo,'' \emph{arXiv preprint arXiv:2509.15162}, 2025.

\bibitem{becker2023predicting}
\BIBentryALTinterwordspacing
S.~Becker, M.~Klein, A.~Neitz, G.~Parascandolo, and N.~Kilbertus, ``Predicting ordinary differential equations with transformers,'' 2023, published at the 40th International Conference on Machine Learning (ICML 2023). [Online]. Available: \url{https://arxiv.org/abs/2307.12617}
\BIBentrySTDinterwordspacing

\bibitem{Yang2023FLUIDGPTL}
S.~D. Yang, Z.~A. Ali, and B.~M. Wong, ``Fluid-gpt (fast learning to understand and investigate dynamics with a generative pre-trained transformer): Efficient predictions of particle trajectories and erosion,'' \emph{Industrial {\&} Engineering Chemistry Research}, vol.~62, no.~37, pp. 15\,278--15\,289, 2023.

\bibitem{nie2023patchtst}
\BIBentryALTinterwordspacing
Y.~Nie, N.~H. Nguyen, P.~Sinthong, and J.~Kalagnanam, ``A time series is worth 64 words: Long-term forecasting with transformers,'' in \emph{The Eleventh International Conference on Learning Representations (ICLR)}, 2023, accepted for publication at ICLR 2023. [Online]. Available: \url{https://openreview.net/forum?id=Jbdc0vTOcol}
\BIBentrySTDinterwordspacing

\bibitem{liu2023itransformer}
\BIBentryALTinterwordspacing
Y.~Liu, T.~Hu, H.~Zhang, H.~Wu, S.~Wang, L.~Ma, and M.~Long, ``itransformer: Inverted transformers are effective for time series forecasting,'' 2023. [Online]. Available: \url{https://arxiv.org/abs/2310.06625}
\BIBentrySTDinterwordspacing

\bibitem{o2016convolutional}
T.~J. O’Shea, J.~Corgan, and T.~C. Clancy, ``Convolutional radio modulation recognition networks,'' in \emph{Engineering Applications of Neural Networks: 17th International Conference, EANN 2016, Aberdeen, UK, September 2-5, 2016, Proceedings 17}.\hskip 1em plus 0.5em minus 0.4em\relax Springer, 2016, pp. 213--226.

\bibitem{zhang2024class}
Z.~Zhang, M.~Zhu, J.~Liu, Y.~Li, and S.~Wang, ``Class information guided reconstruction for automatic modulation open-set recognition,'' \emph{IEEE Transactions on Cognitive Communications and Networking}, 2024.

\bibitem{lin2020improved}
Y.~Lin, Y.~Tu, and Z.~Dou, ``An improved neural network pruning technology for automatic modulation classification in edge devices,'' \emph{IEEE Transactions on Vehicular Technology}, vol.~69, no.~5, pp. 5703--5706, 2020.

\bibitem{rajendran2018deep}
S.~Rajendran, W.~Meert, D.~Giustiniano, V.~Lenders, and S.~Pollin, ``Deep learning models for wireless signal classification with distributed low-cost spectrum sensors,'' \emph{IEEE Transactions on Cognitive Communications and Networking}, vol.~4, no.~3, pp. 433--445, 2018.

\bibitem{ke2021real}
Z.~Ke and H.~Vikalo, ``Real-time radio technology and modulation classification via an lstm auto-encoder,'' \emph{IEEE Transactions on Wireless Communications}, vol.~21, no.~1, pp. 370--382, 2021.

\bibitem{zhang2020automatic}
Z.~Zhang, H.~Luo, C.~Wang, C.~Gan, and Y.~Xiang, ``Automatic modulation classification using cnn-lstm based dual-stream structure,'' \emph{IEEE Transactions on Vehicular Technology}, vol.~69, no.~11, pp. 13\,521--13\,531, 2020.

\bibitem{wang2021multidimensional}
N.~Wang, Y.~Liu, L.~Ma, Y.~Yang, and H.~Wang, ``Multidimensional cnn-lstm network for automatic modulation classification,'' \emph{Electronics}, vol.~10, no.~14, p. 1649, 2021.

\bibitem{li2022transformer}
L.~Li, C.~Qin, G.~Li, S.~Hu, Y.~Xie, and Z.~Lei, ``Transformer-based radio modulation mode recognition,'' in \emph{Journal of Physics: Conference Series}, vol. 2384, no.~1.\hskip 1em plus 0.5em minus 0.4em\relax IOP Publishing, 2022, p. 012017.

\bibitem{zheng2022tmrn}
Y.~Zheng, Y.~Ma, and C.~Tian, ``Tmrn-glu: A transformer-based automatic classification recognition network improved by gate linear unit,'' \emph{Electronics}, vol.~11, no.~10, p. 1554, 2022.

\bibitem{chen2022abandon}
Y.~Chen, B.~Dong, C.~Liu, W.~Xiong, and S.~Li, ``Abandon locality: Frame-wise embedding aided transformer for automatic modulation recognition,'' \emph{IEEE Communications Letters}, vol.~27, no.~1, pp. 327--331, 2022.

\bibitem{tu2020complex}
Y.~Tu, Y.~Lin, C.~Hou, and S.~Mao, ``Complex-valued networks for automatic modulation classification,'' \emph{IEEE Transactions on Vehicular Technology}, vol.~69, no.~9, pp. 10\,085--10\,089, 2020.

\bibitem{xiao2023complex}
C.~Xiao, S.~Yang, and Z.~Feng, ``Complex-valued depthwise separable convolutional neural network for automatic modulation classification,'' \emph{IEEE Transactions on Instrumentation and Measurement}, vol.~72, pp. 1--10, 2023.

\bibitem{luo2023complex}
Q.~Luo, M.-M. Zhao, Z.~Chen, Z.~Su, and M.-J. Zhao, ``Complex-valued convolution and frequency global filter for automatic modulation recognition,'' \emph{IEEE Communications Letters}, vol.~27, no.~7, pp. 1779--1783, 2023.

\bibitem{ren2022complex}
Y.~Ren, W.~Jiang, and Y.~Liu, ``Complex-valued parallel convolutional recurrent neural networks for automatic modulation classification,'' in \emph{2022 IEEE 25th International Conference on Computer Supported Cooperative Work in Design (CSCWD)}.\hskip 1em plus 0.5em minus 0.4em\relax IEEE, 2022, pp. 804--809.

\bibitem{yang2022automatic}
X.~Yang, R.~Zhang, H.~Xie, H.~Sun, and H.~Li, ``Automatic modulation mode recognition of communication signals based on complex-valued neural network,'' in \emph{2022 International Conference on Computing, Communication, Perception and Quantum Technology (CCPQT)}.\hskip 1em plus 0.5em minus 0.4em\relax IEEE, 2022, pp. 32--37.

\bibitem{xu2023novel}
Z.~Xu, S.~Hou, S.~Fang, H.~Hu, and Z.~Ma, ``A novel complex-valued hybrid neural network for automatic modulation classification,'' \emph{Electronics}, vol.~12, no.~20, p. 4380, 2023.

\bibitem{lei2024understanding}
J.~Lei, Y.~Li, L.-Y. Yung, Y.~Leng, Q.~Lin, and Y.-C. Wu, ``Understanding complex-valued transformer for modulation recognition,'' \emph{IEEE Wireless Communications Letters}, 2024.

\bibitem{leng2025unveiling}
Y.~Leng, Q.~Lin, L.-Y. Yung, J.~Lei, Y.~Li, and Y.-C. Wu, ``Unveiling the power of complex-valued transformers in wireless communications,'' \emph{IEEE Transactions on Communications}, vol.~74, pp. 612--627, 2026.

\bibitem{LiDWYH24}
\BIBentryALTinterwordspacing
W.~Li, W.~Deng, K.~Wang, L.~You, and Z.~Huang, ``A complex-valued transformer for automatic modulation recognition,'' \emph{{IEEE} Internet Things J.}, vol.~11, no.~12, pp. 22\,197--22\,207, 2024. [Online]. Available: \url{https://doi.org/10.1109/JIOT.2024.3379429}
\BIBentrySTDinterwordspacing

\bibitem{wang2025survey}
X.~Wang, Y.~Zhao, and Z.~Huang, ``A survey of deep transfer learning in automatic modulation classification,'' \emph{IEEE Transactions on Cognitive Communications and Networking}, 2025.

\bibitem{jakubovitz2019generalization}
D.~Jakubovitz, R.~Giryes, and M.~R. Rodrigues, ``Generalization error in deep learning,'' in \emph{Compressed sensing and its applications: third international MATHEOn conference 2017}.\hskip 1em plus 0.5em minus 0.4em\relax Springer, 2019, pp. 153--193.

\bibitem{stinchcomb1989multilayered}
M.~Stinchcomb, ``Multilayered feedforward networks are universal approximators,'' \emph{Neural Networks}, vol.~2, pp. 356--359, 1989.

\bibitem{Jiang2021}
H.~Jiang, Z.~Li, and Q.~Li, ``Approximation theory of convolutional architectures for time series modelling,'' in \emph{Proceedings of the 38th International Conference on Machine Learning}.\hskip 1em plus 0.5em minus 0.4em\relax PMLR, July 2021, pp. 4961--4970.

\bibitem{Li2021}
Z.~Li, H.~Jiang, and Q.~Li, ``On the approximation properties of recurrent encoder-decoder architectures,'' in \emph{International Conference on Learning Representations}, September 2021.

\bibitem{Li2022}
Z.~Li, J.~Han, W.~E, and Q.~Li, ``Approximation and optimization theory for linear continuous-time recurrent neural networks,'' \emph{Journal of Machine Learning Research}, vol.~23, no.~42, pp. 1--85, 2022.

\bibitem{jiang2024approximation}
H.~Jiang and Q.~Li, ``Approximation rate of the transformer architecture for sequence modeling,'' \emph{Advances in Neural Information Processing Systems}, vol.~37, pp. 68\,926--68\,955, 2024.

\bibitem{girosi1995approximation}
F.~Girosi, ``Approximation error bounds that use vc-bounds,'' in \emph{Proc. International Conference on Artificial Neural Networks, F. Fogelman-Soulie and P. Gallinari, editors}, vol.~1, 1995, pp. 295--302.

\bibitem{barron1994approximation}
A.~R. Barron, ``Approximation and estimation bounds for artificial neural networks,'' \emph{Machine learning}, vol.~14, pp. 115--133, 1994.

\bibitem{Bartlett2017}
P.~L. {}Bartlett, D.~J. Foster, and M.~J. Telgarsky, ``Spectrally-normalized margin bounds for neural networks,'' \emph{Advances in neural information processing systems}, vol.~30, 2017.

\bibitem{chen2023spectral}
H.~Chen, F.~He, S.~Lei, and D.~Tao, ``Spectral complexity-scaled generalisation bound of complex-valued neural networks,'' \emph{Artificial Intelligence}, vol. 322, p. 103951, 2023.

\bibitem{o2018over}
T.~J. O’Shea, T.~Roy, and T.~C. Clancy, ``Over-the-air deep learning based radio signal classification,'' \emph{IEEE Journal of Selected Topics in Signal Processing}, vol.~12, no.~1, pp. 168--179, 2018.

\bibitem{hossin2015review}
M.~Hossin and M.~N. Sulaiman, ``A review on evaluation metrics for data classification evaluations,'' \emph{International journal of data mining \& knowledge management process}, vol.~5, no.~2, p.~1, 2015.

\bibitem{kirillov2019panoptic}
A.~Kirillov, K.~He, R.~Girshick, C.~Rother, and P.~Doll{\'a}r, ``Panoptic segmentation,'' in \emph{Proceedings of the IEEE/CVF conference on computer vision and pattern recognition}, 2019, pp. 9404--9413.

\bibitem{li2013frequency}
Y.~Li, ``Frequency independent iq imbalance estimation and compensation,'' in \emph{In-Phase and Quadrature Imbalance: Modeling, Estimation, and Compensation}.\hskip 1em plus 0.5em minus 0.4em\relax Springer, 2013, pp. 29--47.

\bibitem{wong2019specific}
L.~J. Wong, W.~C. Headley, and A.~J. Michaels, ``Specific emitter identification using convolutional neural network-based iq imbalance estimators,'' \emph{IEEE Access}, vol.~7, pp. 33\,544--33\,555, 2019.

\bibitem{tekbiyik2020robust}
K.~Tekb{\i}y{\i}k, A.~R. Ekti, A.~G{\"o}r{\c{c}}in, G.~K. Kurt, and C.~Ke{\c{c}}eci, ``Robust and fast automatic modulation classification with cnn under multipath fading channels,'' in \emph{2020 IEEE 91st Vehicular Technology Conference (VTC2020-Spring)}.\hskip 1em plus 0.5em minus 0.4em\relax IEEE, 2020, pp. 1--6.

\bibitem{liu2017deep}
X.~Liu, D.~Yang, and A.~El~Gamal, ``Deep neural network architectures for modulation classification,'' in \emph{2017 51st Asilomar Conference on Signals, Systems, and Computers}.\hskip 1em plus 0.5em minus 0.4em\relax IEEE, 2017, pp. 915--919.

\bibitem{rashvand2024enhancing}
N.~Rashvand, K.~Witham, G.~Maldonado, V.~Katariya, N.~Marer~Prabhu, G.~Schirner, and H.~Tabkhi, ``Enhancing automatic modulation recognition for iot applications using transformers,'' \emph{IoT}, vol.~5, no.~2, pp. 212--226, 2024.

\bibitem{han2024lightweight}
J.~Han, Z.~Yu, and J.~Yang, ``A lightweight deep learning architecture for automatic modulation classification of wireless internet of things,'' \emph{IET Communications}, vol.~18, no.~18, pp. 1220--1230, 2024.

\bibitem{wu2021autoformer}
H.~Wu, J.~Xu, J.~Wang, and M.~Long, ``Autoformer: Decomposition transformers with auto-correlation for long-term series forecasting,'' \emph{Advances in neural information processing systems}, vol.~34, pp. 22\,419--22\,430, 2021.

\end{thebibliography}

\begin{IEEEbiography}[{\includegraphics[width=1in,height=1.25in,clip,keepaspectratio]{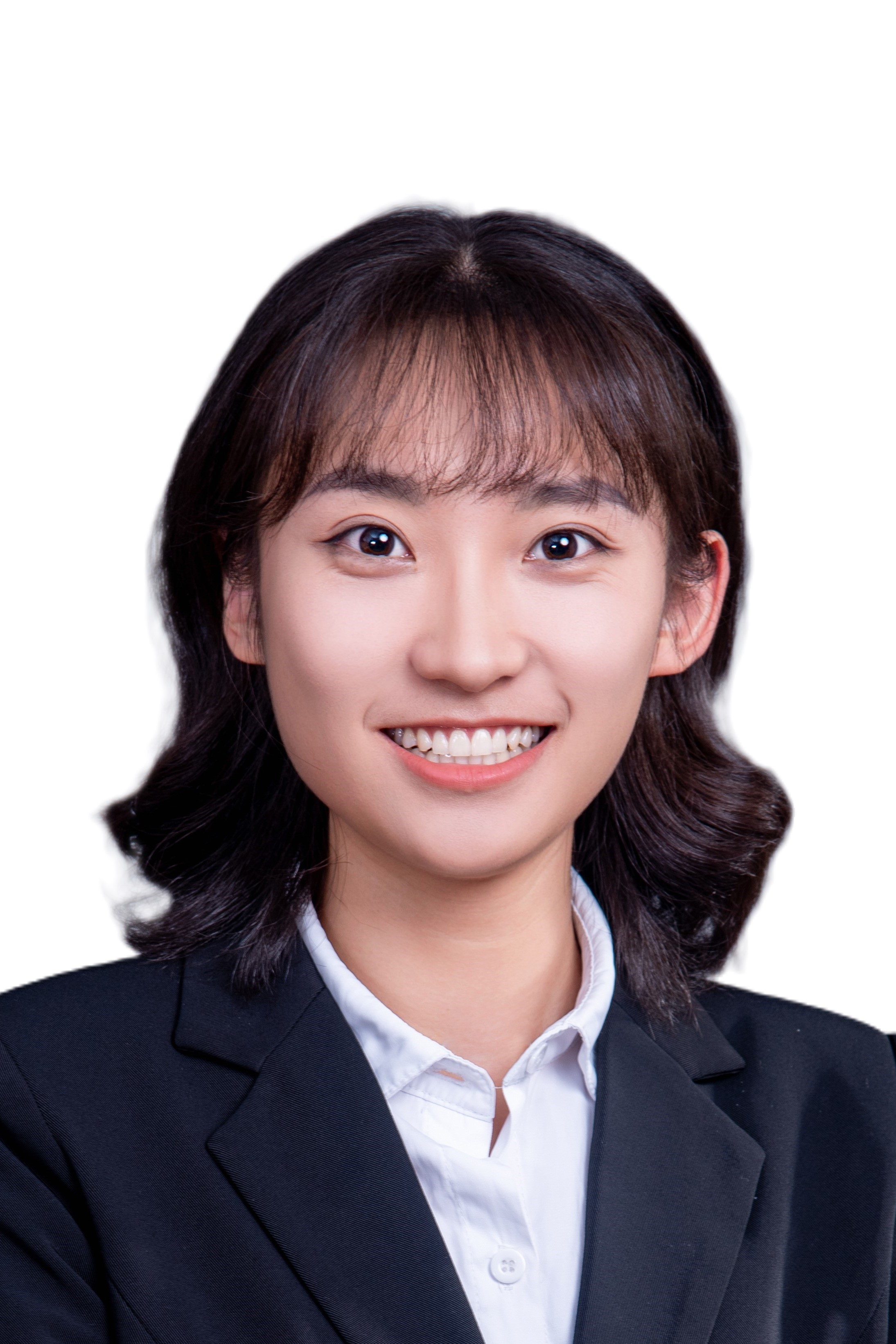}}]{Yurui Zhao}
received her B.Eng. and M.Phil. degrees in Electronic Science and Engineering from the National University of Defense Technology. She is currently pursuing a Ph.D. degree with the State Key Laboratory of Complex Electromagnetic Environment Effects on Electronics and Information Systems, National University of Defense Technology. 
Her current research interests include representation learning, non-cooperative signal processing, and deep learning. 
\end{IEEEbiography}

\begin{IEEEbiography}[{\includegraphics[width=1in,height=1.25in,clip,keepaspectratio]{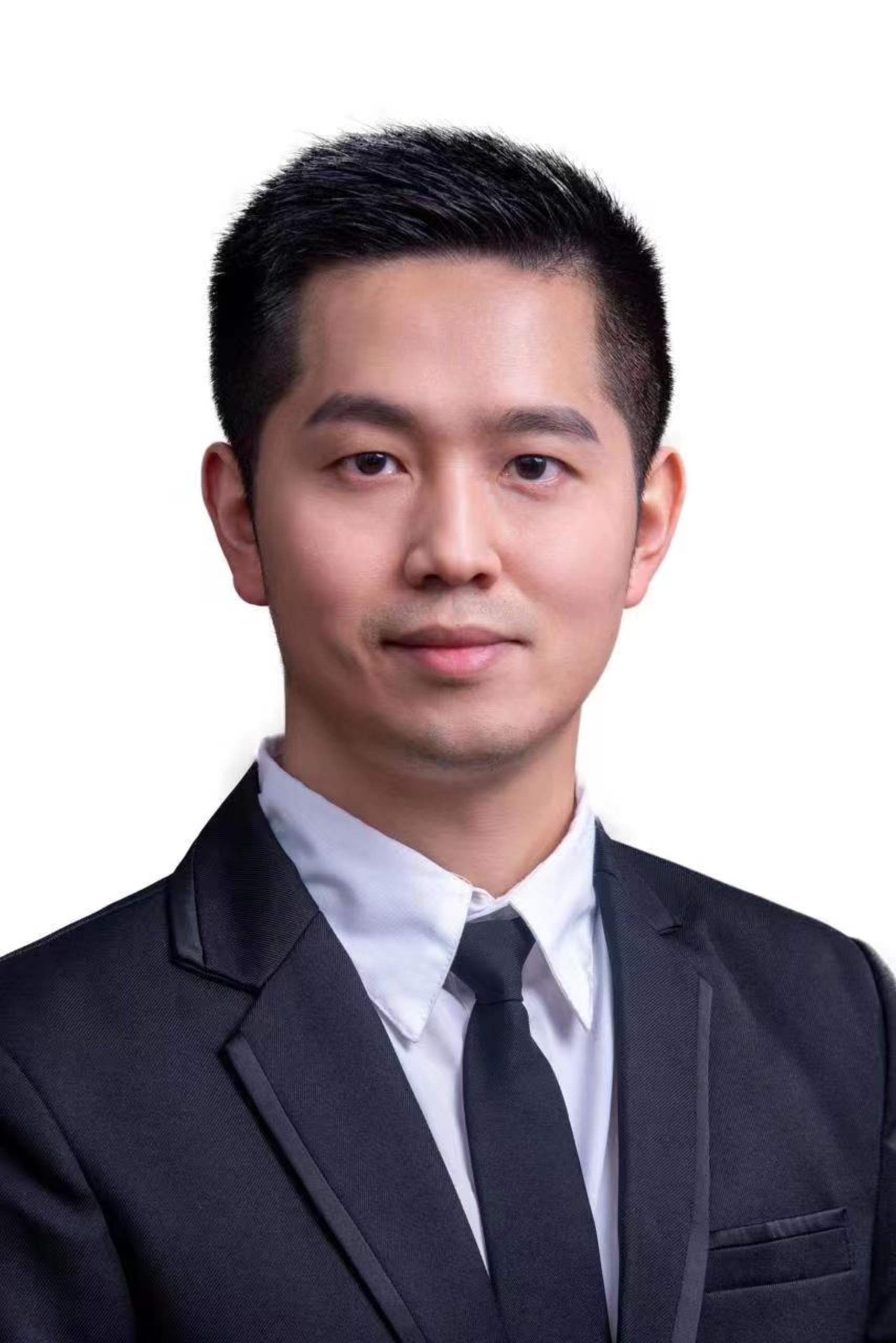}}]{Xiang Wang} received his B.Eng. and Ph.D. degrees in Electronic Science and Engineering from the National University of Defense Technology in 2007 and 2013, respectively. Currently, he is an Associate Professor at the College of Electronic Science and Engineering, National University of Defense Technology. 
His research interests include blind signal processing in radar and communication applications, as well as deep learning and machine learning for signal processing in these fields. Dr. Wang has published over 70 conference and journal papers, two of which have been recognized as ESI top 1\% highly cited papers. Additionally, he has received two best paper awards at international conferences and has served multiple times as a session chair for the IET Radar Conference.
\end{IEEEbiography}

\begin{IEEEbiography}[{\includegraphics[width=1in,height=1.25in,clip,keepaspectratio]{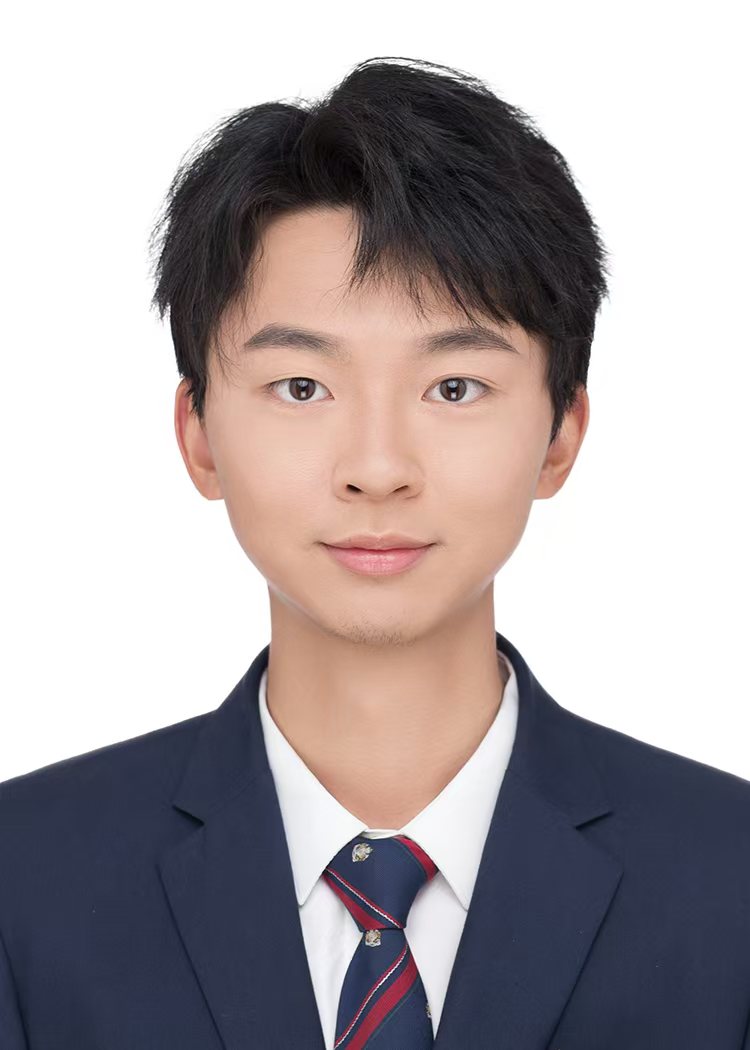}}]{Jingreng Lei} 
received the B.Eng. degree from Sun Yat-sen University, and the M.Phil. degree from
the Department of Electrical and Computer Engineering, The University of Hong Kong (HKU). He
is currently pursuing the Ph.D. degree with the Department of Electrical and Computer Engineering,
HKU. His research interests include sensing and communications, spatial intelligence.
\end{IEEEbiography}

\begin{IEEEbiography}[{\includegraphics[width=1in,height=1.25in,clip,keepaspectratio]{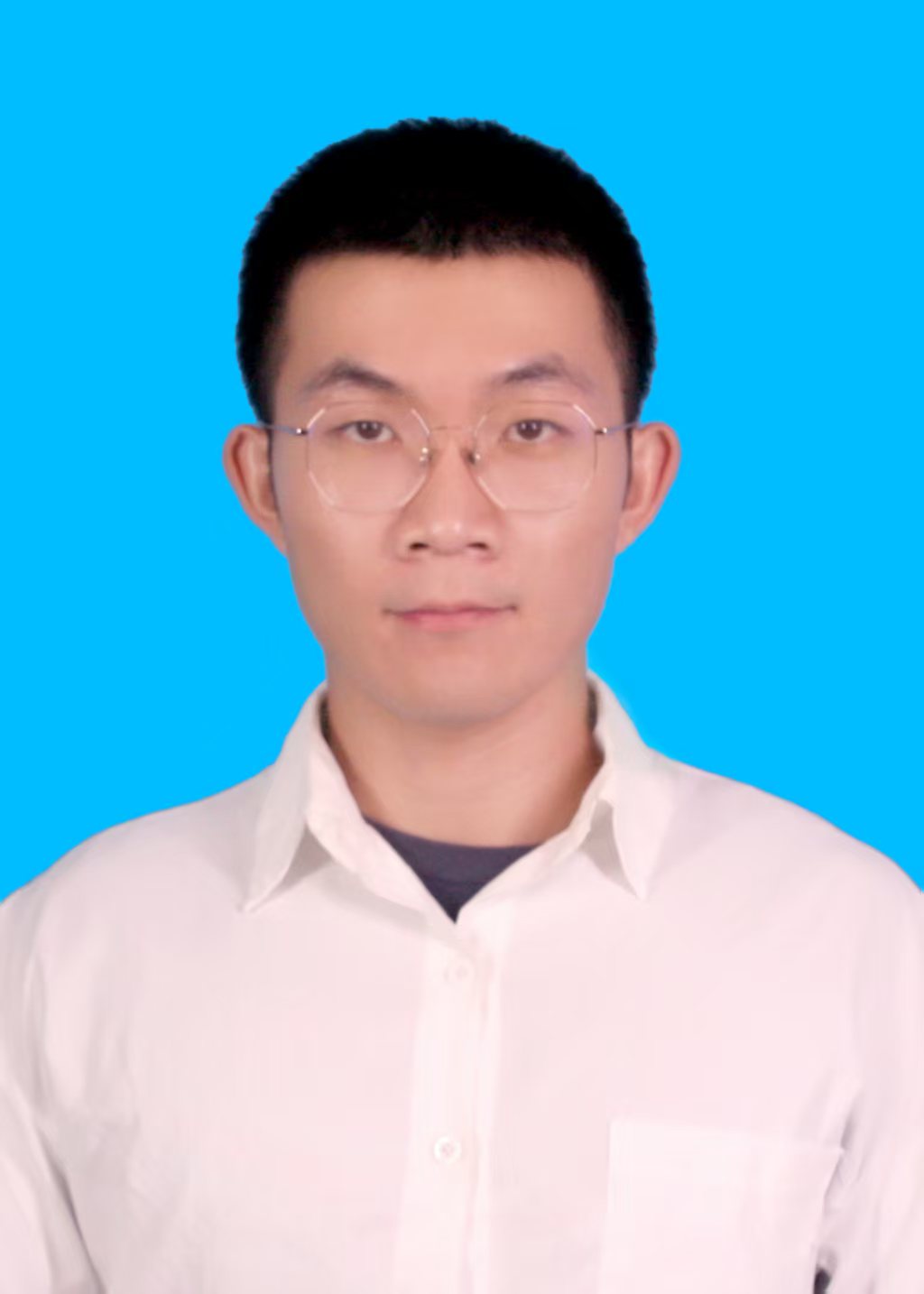}}]{Wanlong Zhang} 
received his B.Eng. and M.Phil. degrees in Electronic Science and Engineering from the National University of Defense Technology. 
He is currently pursuing a Ph.D. degree with the State Key Laboratory of Complex Electromagnetic Environment Effects on Electronics and Information Systems, National University of Defense Technology. 
His current research interests include wireless communications and deep learning. 
\end{IEEEbiography}

\begin{IEEEbiography}[{\includegraphics[width=1in,height=1.25in,clip,keepaspectratio]{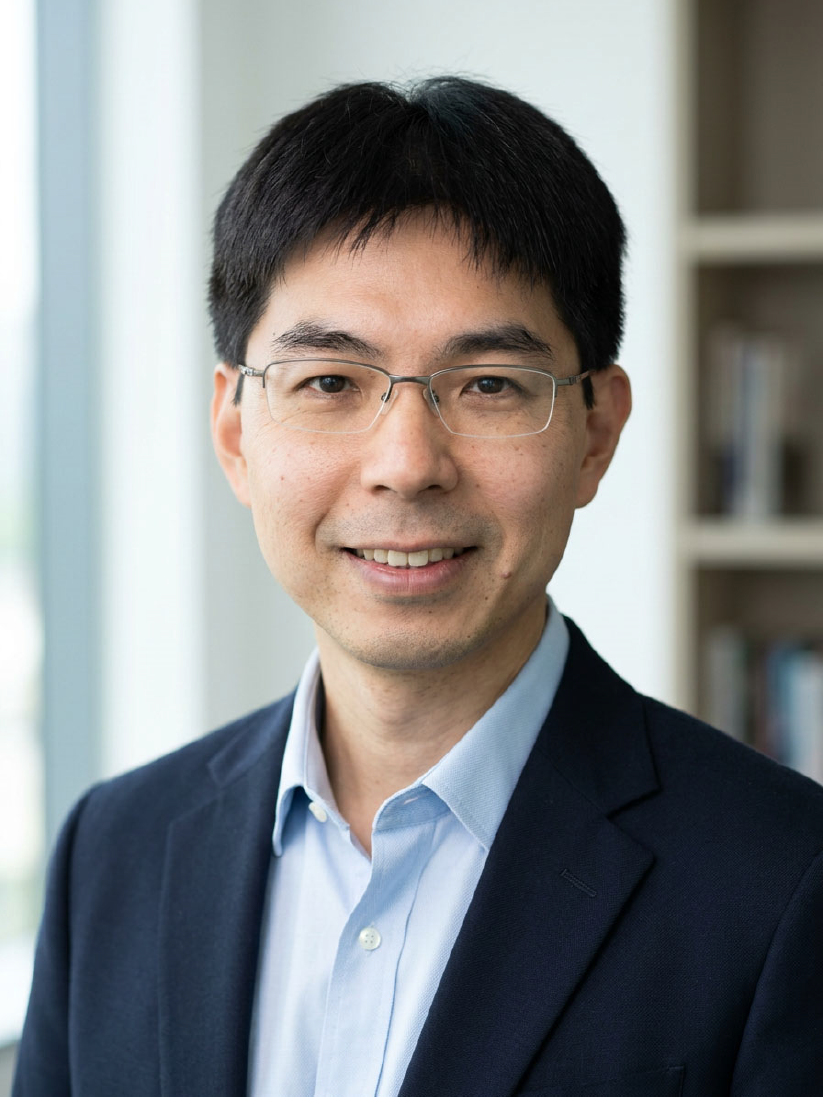}}]{Yik-Chung Wu} received the Ph.D. degree from Texas A\&M University, College Station, in 2005. After working as a Member of Technical Staff in Thomson Corporate Research, Princeton, NJ, for one year, he has been with The University of Hong Kong since 2006, currently as an Associate Professor. He was a visiting scholar at Princeton University, in summers of 2015 and 2017. His research interests are in general areas of signal processing, machine learning, and communication systems, and in particular Bayesian inference, distributed algorithms, and large-scale optimization.  Dr. Wu served as an Editor for IEEE Communications Letters, and IEEE Transactions on Communications. He is currently a Senior Area Editor for IEEE Transactions on Signal Processing, an Associate Editor for IEEE Wireless Communications Letters, and an Editor for Journal of Communications and Networks.  He received four best paper awards in international conferences, with the most recent one from IEEE International Conference on Communications (ICC) 2020.  He was a symposium chair for many international conferences, including IEEE International Conference on Communications (ICC) 2023 and IEEE Globecom 2025. He was elected the Best Editor of the year 2023 in IEEE Wireless Communications Letters. He is an elected member of IEEE signal processing society SPCOM Technical Committee (2025-2026), and an IEEE Distinguished Lecturer (Vehicular Technology Society 2025 class). 

\end{IEEEbiography}

\begin{IEEEbiography}[{\includegraphics[width=1in,height=1.25in,clip,keepaspectratio]{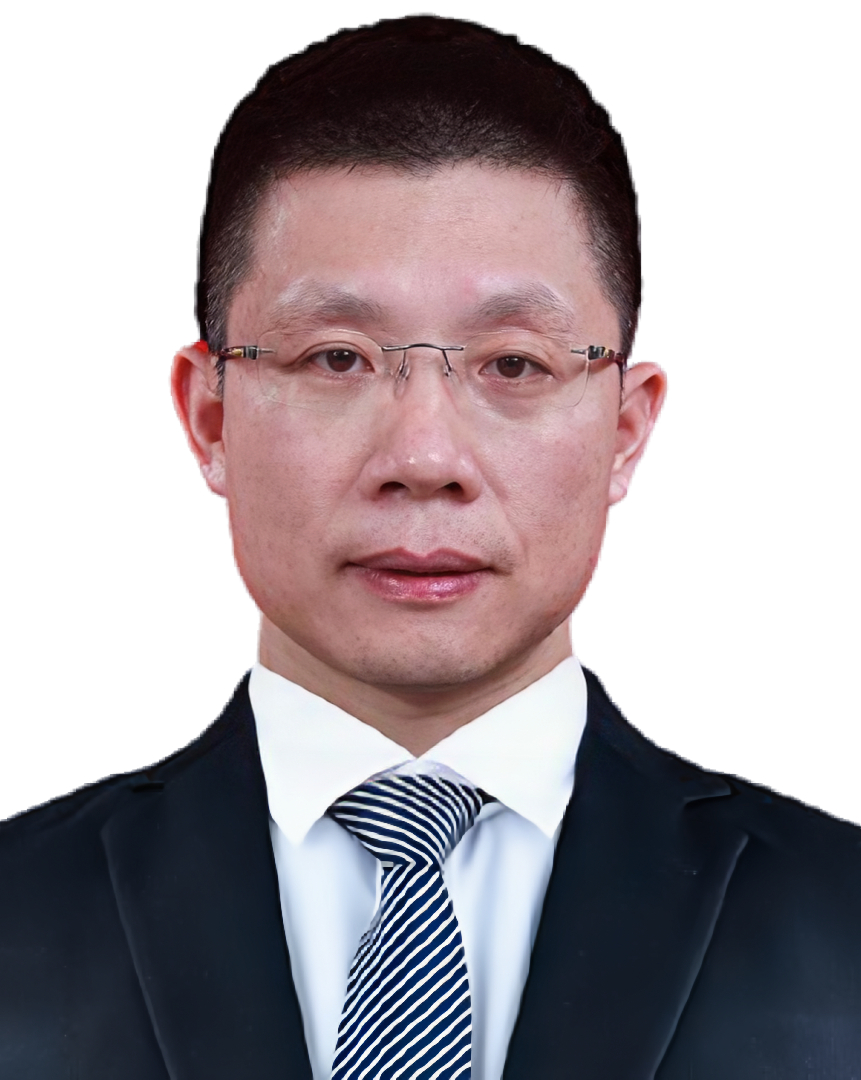}}]{Zhitao Huang}
received the B.Eng. and Ph.D. degrees in Information and Communication Engineering from the National University of Defense Technology. 
He is currently a professor at the College of Electronic Science and Engineering, National University of Defense Technology. 
His research interests include radar and communication signal processing and pattern recognition.
\end{IEEEbiography}

\clearpage
\newpage

\section*{APPENDIX}

\subsection{Proof of Theorem \ref{conclusion_approx} \label{proof_approx}}

The ideal format of the function ${\cal F^{\star}}$ can be decomposed into real-valued part and imaginary-valued part, expressed as 
\begin{equation}
{\cal F}^{\star} = {\cal R}({\cal F}^{\star}) + j{\cal I}({\cal F}^{\star}).
\end{equation}  
For RVNN $\hat{{\cal F}}^{\star}_{real}$, we have 
\begin{equation}
    \inf ||{\cal F}^{\star}-{ \hat{{\cal F}}^{\star}_{real}}|| \le E(C^{(a)}({\cal R}({\cal F}^{\star})),k)+||{\cal I}({\cal F}^{\star})||.
\end{equation}
For the CVNN $ \hat{{\cal F}^{\star}_{comp}}$, we have 
\begin{equation}
    \inf ||{\cal F}^{\star}-{ \hat{{\cal F}}^{\star}_{comp}}|| \le E(C^{(a)}({\cal F}^{\star}),2k).
\end{equation}
For the DualNN $ \hat{{\cal F}^{\star}_{dual}}$, we have 
\begin{equation}
\begin{array}{l}
    \inf ||{\cal F}^{\star}-{ \hat{{\cal F}}^{\star}_{dual}}|| 
    \\ 
    =\inf \left(||{\cal R}({\cal F}^{\star})-{ \hat{{\cal F}}^{\star}_{dual}}||
    + ||{\cal I}({\cal F}^{\star})-{\hat{{\cal F}^{\star}_{dual}}}|| \right)
    \\
    \le E(C^{(a)}({\cal R}({\cal F}^{\star})),k)+E(C^{(a)}({\cal I}({\cal F}^{\star})),k)
\end{array}.
\end{equation}

As the complexity of neural network grows unbounded as $k \xrightarrow{}\infty$, the approximation error bound for both DualNNs and CVNNs asymptotically converges to zero, enabling these architectures to achieve arbitrarily precise representations of the optimal target function across both real and imaginary domains. 
In contrast, RVNNs exhibit an inherent limitation since they only approximate the real component of the optimal function, irrespective of network complexity.

\subsection{Proof of Theorem \ref{conclusion_est} \label{proof_est}}

We provide a concise outline of the mathematical proof. The estimation error bound is governed by the spectral complexity $R_{\mathcal{A}}$, whose dominant term is $\prod_i \|\mathbf{A}_i\|_{\sigma}$. By deriving the closed-form optimal weight matrices for CVNN and DualNN and comparing their maximum eigenvalues, we obtain $\lambda_{\text{dual}} < \lambda_{\text{comp}}$ under mild conditions on the I/Q components. This inequality propagates through the spectral norm product, yielding $R_{\mathcal{A}}^{\text{dual}} < R_{\mathcal{A}}^{\text{comp}}$, and consequently the estimation error bound of DualNN is strictly lower than that of CVNN.
\color{black}

\begin{definition}[Spectral Complexity]
The spectral complexity $R_{\mathcal{A}}$ of a network $\cal F$ with weights ${\mathbf{A}}$ \cite{Bartlett2017} is defined as
\begin{equation}
R_{\mathcal{A}} := \left(\prod_{i=1}^{L} \rho_{i} \|\mathbf{A}_{i}\|_{\sigma}\right) \left(\sum_{i=1}^{L} \frac{\|\mathbf{A}_{i}^{\top} - \mathbf{M}_{i}^{\top}\|_{2,1}^{2/3}}{\|\mathbf{A}_{i}\|_{\sigma}^{2/3}}\right)^{3/2},
\label{eq_R_A}
\end{equation}
where $\|\cdot\|_{\sigma}$ denotes the spectral norm and $\|\cdot\|_{p, q}$ denotes the $(p, q)$ matrix norm such as $\|\mathbf{A}\|_{p, q} = \left\|\left(\left\|\mathbf{A}_{:, 1}\right\|_{p}, \ldots, \left\|\mathbf{A}_{:, m}\right\|_{p}\right)\right\|_{q}$. $\mathbf{M}_{i} \in \mathbb{R}^{d_{i} \times d_{i-1}}$ are reference matrices that can be chosen to adapt the bound to specific architectures (e.g., $\mathbf{M}_{i} = \mathbf{I}$ for ResNet).
\end{definition}

Eq.~\eqref{eq_R_A} consists of two parts, i.e., the Lipschitz constant of this neural network \( \prod_{i=1}^{L}\rho_{i}\left\|\mathbf{A}_{i}\right\|_{\sigma} \) and another factor related to the sum of quotients of weight matrix norms 
\( \left(\sum_{i=1}^{L}\frac{\left\|\mathbf{A}_{i}^{\top}-\mathbf{M}_{i}^{\top}\right\|_{2,1}^{2/3}}{\left\|\mathbf{A}_{i}\right\|_{\sigma}^{2/3}}\right)^{3/2} \). 
Since the two norms \({\left\|\mathbf{A}_{i}^{\top}-\mathbf{M}_{i}^{\top}\right\|_{2,1}^{2/3}}\) and \( {\left\|\mathbf{A}_{i}\right\|_{\sigma}^{2/3}}\) are equivalent, the factor remains in the interval \([C_{1}, C_{2}]\) for some constants \( C_{1} \) and \( C_{2} \), whose change is minor \cite{Bartlett2017}.
Also, for the Lipschitz constant of the neural network, the Lipschitz constants of activation functions \( \rho_{i} \) remain unchanged.
Hence, the part \( \left(\prod_{i=1}^{L}\left\|\mathbf{A}_{i}\right\|_{\sigma}\right) \) 
dominates the change in \( R_{\mathcal{A}} \), where
$\left\|\mathbf{A}_i\right\|_{\sigma}$ is the maximum singular value, i.e., the square of the maximum eigenvalue of the weight matrix $\mathbf{A}_i$, expressed as 
\(\left\|\mathbf{A}_i\right\|_{\sigma}=\sigma_{max}=\sqrt{\lambda_{max} (\mathbf{A}_i^{\text{H}}\mathbf{A}_i)}\).

Let $\mathbf{z}_{in} = \mathbf{z}_{in}^{(I)} + j\mathbf{z}_{in}^{(Q)}$ and $\mathbf{z}_{out}=\mathbf{z}_{out}^{(I)} + j\mathbf{z}_{out}^{(Q)}$ be the input and output of the $i$th layer. Define $\alpha_{out}^{(I)} = \| \mathbf{z}_{out}^{(I)} \|^2$, $\alpha_{in}^{(I)} = \| \mathbf{z}_{in}^{(I)} \|^2$, $\alpha_{in}^{(Q)} = \| \mathbf{z}_{in}^{(Q)} \|^2$, $\alpha_{out}^{(Q)} = \| \mathbf{z}_{out}^{(Q)} \|^2$, $ \beta=({\mathbf{z}_{in}^{(I)}})^{\top}\mathbf{z}_{in}^{(Q)}=(\mathbf{z}_{in}^{(Q)})^{\top}\mathbf{z}_{in}^{(I)}$, and $\gamma=({\mathbf{z}_{out}^{(I)}})^{\top}\mathbf{z}_{out}^{(Q)}=(\mathbf{z}_{out}^{(Q)})^{\top}\mathbf{z}_{out}^{(I)}$.
Next, we analyze the $\mathbf{A}_i$ of CVNN and DualNN to compare their spectral complexity.

\begin{enumerate}
    \item \textbf{Case 1 (CVNN)}:
        
        For the CVNN, we have 
            \begin{equation}
               \mathop {\min }\limits_{\mathbf{A}_i}  J({\mathbf{A}_i}) = \left\| {\mathbf{A}_i}{\mathbf{z}_{in}}-\mathbf{z}_{out} \right\|^2.
            \end{equation}
            Setting $\bigl( \partial J(\mathbf{A}_i) \bigr) / \bigl( \partial \mathbf{A}_{i}^{*} \bigr)=\bf{0}$, we can obtain the solution for $\mathbf{A}_i$ as follows.
            \begin{equation}
                {{\mathbf{A}_i}} = {{\mathbf{z}_{out}}}{{(\mathbf{z}_{in})}^{\rm{H}}}{\left( {{\mathbf{z}_{in}}^{\rm{H}}}{{\mathbf{z}_{in}}} \right)^{-1}}.
                \label{eq_ai_complex}
        \end{equation}
         Based on Eq.~\eqref{eq_ai_complex}, $\mathbf{A}_i^{\rm{H}} \mathbf{A}_i$ can be written as:
         \begin{equation}
            \mathbf{A}_i^{\rm{H}} \mathbf{A}_i = \frac{\|\mathbf{z}_{\mathrm{out}}\|^2}{\|\mathbf{z}_{\mathrm{in}}\|^4} \mathbf{z}_{\mathrm{in}} \mathbf{z}_{\mathrm{in}}^{\rm{H}}.
            \label{eq:AHA}
        \end{equation}
        By calculating the maximum eigenvalue of \eqref{eq:AHA}, we obtain
        \begin{equation}
            \lambda_{comp} =  \frac{\alpha_{out}^{(I)}+\alpha_{out}^{(Q)}}{\alpha_{in}^{(I)}+\alpha_{in}^{(Q)}}.
            \label{lamda_complex_source}
        \end{equation}
        
    \item \textbf{Case 2 (DualNN)}:

            For the DualNN, we have 
                \begin{equation}
                   \mathop {\min }\limits_{\mathbf{A}_i}  J({\mathbf{A}_i}) = \left\| {\mathbf{A}_i}{\mathbf{z}_{in}^{(I)}}-\mathbf{z}_{out}^{(I)} \right\|^2 + \left\| {\mathbf{A}_i}{\mathbf{z}_{in}^{(Q)}}-\mathbf{z}_{out}^{(Q)} \right\|^2.
                \end{equation}
            Setting $\bigl( \partial J(\mathbf{A}_i) \bigr) / \bigl( \partial \mathbf{A}_i \bigr)=\bf{0}$, we can obtain the solution for $\mathbf{A}_i$ as follows:
            \begin{equation}
            \begin{array}{cc}
                \mathbf{A}_i = \left( {{\mathbf{z}_{out}^{(I)}}}{{(\mathbf{z}_{in}^{(I)})}^{\top}}
                +{{\mathbf{z}_{out}^{(Q)}}}{{(\mathbf{z}_{in}^{(Q)})}^{\top}}\right) \times \\{\left( {\left({\mathbf{z}_{in}^{(I)}}\right)^{\top}}{{\mathbf{z}_{in}^{(I)}}} + 
                {\left({\mathbf{z}_{in}^{(Q)}}\right)^{\top}}{{\mathbf{z}_{in}^{(Q)}}}
                \right)^{-1}}.
            \end{array}
                \label{eq:A_i_final}
            \end{equation}

                Based on Eq.~\eqref{eq:A_i_final}, $\mathbf{A}_i^\top \mathbf{A}_i$ can be written as:
                \begin{equation}\label{eq:AHA_dual}
                    \mathbf{A}_i^\top\mathbf{A}_i = \frac{\alpha_{out}^{(I)}\mathbf{z}_{in}^{(I)}(\mathbf{z}_{in}^{(I)})^{\top}+\gamma\mathbf{C}+\alpha_{out}^{(Q)}\mathbf{z}_{in}^{(Q)}(\mathbf{z}_{in}^{(Q)})^{\top}}{(\alpha_{in}^{(I)}+\alpha_{in}^{(Q)})^2},
                \end{equation}
                where $\mathbf{C} = \mathbf{z}_{in}^{(I)}(\mathbf{z}_{in}^{(Q)})^{\top}+\mathbf{z}_{in}^{(Q)}(\mathbf{z}_{in}^{(I)})^{\top}$.
                Following the above line, we obtain the maximum eigenvalue: 
                \begin{equation}
                    \lambda_{dual} = \frac{L + \sqrt{D}}{2(\alpha_{in}^{(I)} + \alpha_{in}^{(Q)})^2},
                \end{equation}
                where $L=\alpha_{in}^{(I)}\alpha_{out}^{(I)}+\alpha_{in}^{(Q)}\alpha_{out}^{(Q)}+2\beta\gamma$ and $D=\sqrt{L^2-4(\alpha_{in}^{(I)}\alpha_{in}^{(Q)}-\gamma^2)(\alpha_{out}^{(I)}\alpha_{out}^{(Q)}-\beta^2)}$
\end{enumerate}

We now examine the relationship between $\lambda_{comp}$, and $ \lambda_{dual}$.

Since  $\alpha_{in}^{(I)}\alpha_{in}^{(Q)}>\gamma^2$ and $\alpha_{out}^{(I)}\alpha_{out}^{(Q)}>\beta^2$, it follows that
\begin{equation}
    L+\sqrt{D}<L+\sqrt{L^2}=2L .
\end{equation}
Consequently, $\lambda_{dual}$ satisfies
\begin{equation}
    \lambda_{dual} < \frac{L}{(\alpha_{in}^{(I)}+\alpha_{in}^{(Q)})^2}.
    \label{lamda_dual}
\end{equation}
To analyze $\lambda_{comp}$, we multiply the numerator and denominator of Eq.~\eqref{lamda_complex_source} by $\alpha_{in}^{(I)}+\alpha_{in}^{(Q)}$, yielding
\begin{align}
\lambda_{comp}&=\frac{\alpha_{out}^{(I)}\alpha_{in}^{(I)}+\alpha_{out}^{(Q)}\alpha_{in}^{(Q)}+\alpha_{out}^{(I)}\alpha_{in}^{(Q)}+\alpha_{out}^{(Q)}\alpha_{in}^{(I)}}{(\alpha_{in}^{(I)}+\alpha_{in}^{(Q)})^2}\\
&=\frac{L+\alpha_{out}^{(I)}\alpha_{in}^{(Q)}+\alpha_{out}^{(Q)}\alpha_{in}^{(I)}-2\beta\gamma}{(\alpha_{in}^{(I)}+\alpha_{in}^{(Q)})^2}.
\label{lamda_complex}
\end{align}
Furthermore, when the inequalities $\alpha_{in}^{(I)} > \beta$, $ \alpha_{in}^{(Q)} > \beta$, $\alpha_{out}^{(I)} > \gamma$, and
$\alpha_{out}^{(Q)} > \gamma$, we can derive that the numerator of $\lambda_{comp}$ strictly exceeds $L$. Combining this observation with \eqref{lamda_dual} and \eqref{lamda_complex}, we conclude
\begin{equation}
    \lambda_{comp}>\frac{L}{(\alpha_{in}^{(I)}+\alpha_{in}^{(Q)})^2}>\lambda_{dual}.
\end{equation}

Building upon the results from Eq.~\eqref{eq_R_A}, we obtain the following key inequality
\begin{equation}
\begin{array}{cc}
    R_{\cal A}^{dual} < R_{\cal A}^{comp}.
\end{array}
\end{equation}
When examining the estimation errors through Lemma 3, and Corollary 1, we establish

\begin{equation}
\begin{array}{cc}
\sup \epsilon_{\mathrm{dual}}^{\mathrm{est}} < \sup \epsilon_{\mathrm{comp}}^{\mathrm{est}}.
\end{array}
\end{equation}

\end{document}